\documentclass{article}

\usepackage[final,main,nonatbib]{neurips_2026}
\usepackage[utf8]{inputenc}
\usepackage[T1]{fontenc}
\usepackage{hyperref}
\usepackage{url}
\usepackage{booktabs}
\usepackage{array}
\usepackage{amsmath}
\usepackage{amssymb}
\usepackage{amsfonts}
\usepackage{amsthm}
\usepackage{nicefrac}
\usepackage{microtype}
\usepackage{xcolor}
\usepackage{graphicx}
\usepackage[backend=biber,style=numeric-comp,sorting=none,maxbibnames=99]{biblatex}
\newtheorem{proposition}{Proposition}

\title{Neural-DISCO: Source-Conditioned Counterfactual Editing of Neural Population Activity}

\author{%
  Nico Policzer\thanks{ These authors contributed equally to this work.} \quad
  James Ho\footnotemark[1] \quad
  Joseph Soo\footnotemark[1] \quad
  Xin Tang\thanks{Corresponding author: \texttt{xin.tang@msl.ubc.ca}.} \\
  Michael Smith Laboratories and Department of Computer Science \\
  University of British Columbia \\
  Vancouver, BC, Canada
}

\begin{document}

\maketitle

\begin{abstract}
Understanding how neural activity encodes behavioral and sensory variables is a key challenge in systems neuroscience. This requires models that go beyond predicting activity from labels and instead support source-conditioned counterfactual editing: given a recorded source response, we should be able to edit one measured behavioral or stimulus label and predict how neural activity would change while preserving other observed labels and source-specific variability not explained by the labels. Here, we introduce \textbf{Neural-DISCO}, a disentangled representation learning framework for counterfactual modeling of neural activity. Neural-DISCO partitions the latent space into label-specific components, each associated with an observed behavioral or stimulus label, together with a residual latent that captures variability not supplied through observed labels. Across simulated and real neural recording datasets, we show that this structure enables selective editing of measured labels to generate counterfactual neural responses while preserving the remaining content of the original activity. Further, to gain insight into how behavioral and stimulus labels are encoded at single-neuron resolution, we pair this disentangled framework with feature attribution methods that identify which neurons are most important for each label. We find that disentanglement enhances the recovery of known functional cell roles in both synthetic and real data. Together, these results support Neural-DISCO as a framework for counterfactual modeling of neural activity and provide a practical interface for probing how neural populations encode behavior and sensory information. Code is available at \url{https://github.com/TangLab-UBC/Neural-DISCO-neurips}.
\end{abstract}

\section{Introduction}
Modern neural recording technologies record hundreds to thousands of neurons while animals move, make decisions, and receive structured sensory input. These data make it possible to relate neural population activity to simultaneously recorded behavioral and stimulus labels, a central goal in systems neuroscience \parencite{jun2017fully,steinmetz2019distributed,urai2022large,krakauer2017neuroscience,mathis2026joint,mathis2024decoding,glaser2020machine,Tang2023, international2025brain}. Many questions, however, are not purely predictive. We often want to ask how a neural population response would change if a measured variable were altered---for example, if the animal occupied a different position, reached in another direction, or ran at a different speed---while keeping the remaining source-specific state as intact as possible.

\begin{figure}[!b]
  \centering
  \includegraphics[width=\textwidth]{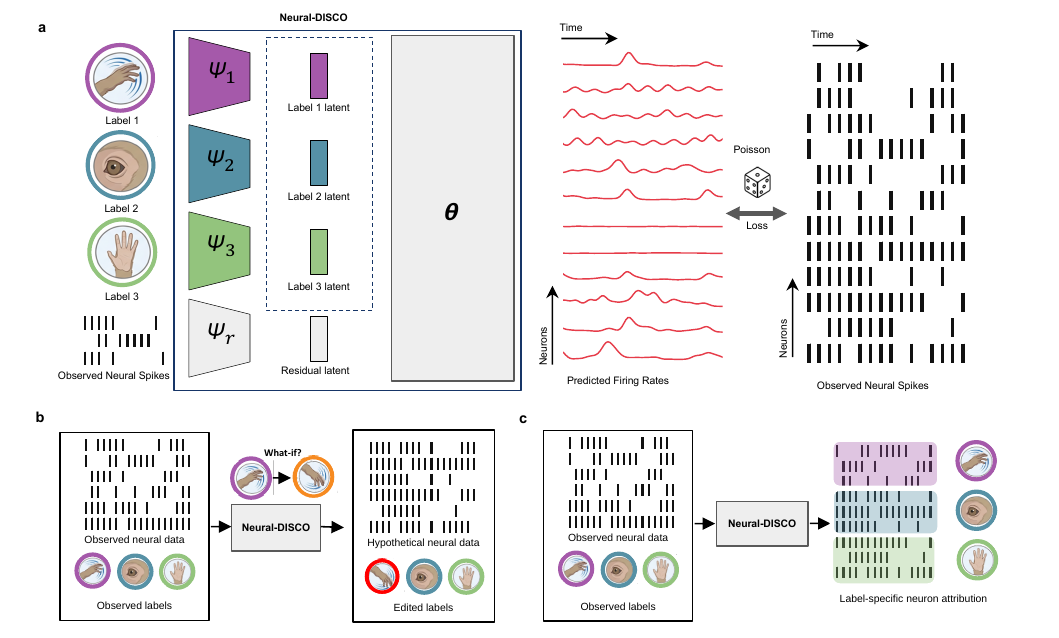}
  \caption{\textbf{Neural-DISCO overview.} (a) Neural-DISCO reconstructs source spike counts from supervised label latents and a bottlenecked residual latent. (b) Counterfactual editing replaces a selected label latent while holding the remaining label blocks and the source residual fixed. (c) The same block-structured generator supports label-specific neuron attribution.}
  \label{fig:method}
\end{figure}
We call this problem \emph{source-conditioned counterfactual editing}, borrowing counterfactual terminology from causal inference while using it operationally rather than claiming causal identification from observational data alone \parencite{Holland01121986}. Given a source response $x$ and observed label values $(y_1,\ldots,y_K)$, the goal is to replace one label value $y_a$ by $\hat y_a$ and generate neural activity for the edited condition while preserving the other observed label values $y_{-a}$, and residual structure not captured by the labels. This differs from ordinary label-conditioned generation \parencite{sohn2015learning,mirza2014conditional}. A model of $p(x'\mid \hat y_a,y_{-a})$ can sample plausible activity consistent with the target labels, but unless it is conditioned on the source response, it generally marginalizes over internal state, spontaneous movement, and unmeasured task variables \parencite{stringer2019spontaneous,musall2019single,urai2022large}. This source-conditioned counterfactual setting instead requires representations that retain residual content of the source sample. We note that only measured labels are editable; unmeasured variation may be preserved in the source residual but is neither identified nor directly editable. 


We introduce \textbf{Neural-DISCO}, a \underline{DIS}entangled learning model for source-conditioned \underline{CO}unterfactual editing of neural population activity (Fig.~\ref{fig:method}). Each measured behavioral or stimulus label is encoded by its own embedding network into corresponding label-specific latents, while a separate residual encoder maps the source spike counts to a residual latent. A generator then decodes the concatenated latents into firing rates. To reduce observed-label leakage into the residual latent, we corrupt the residual with Gaussian noise during training and penalize its activation norm. This bottleneck biases the model to route observed factors through their designated label latents while preserving residual source-specific variability in the residual latent.


At test time, Neural-DISCO encodes a source response, replaces one label-specific latent  with the embedding of a target value, keeps the residual and other label-specific latents fixed, and decodes the edited representation into firing rates. The same factorization also supports neuron-level interpretation. Because labels enter the generator through label-specific blocks, we compute integrated gradients to obtain label-specific attribution scores for each neuron \parencite{intgrad}.




We evaluate Neural-DISCO in three settings that probe complementary failure modes: a rat neural activity simulator with known functional cell classes and ground-truth edited rates; macaque reaching electrophysiology, where edited activity is evaluated by held-out behavioral decoding; and a mouse visual encoding Neuropixels dataset, where running-speed and grating stimulus orientation edits are evaluated jointly with stimulus preservation and independent unit-level tuning metrics. Across these settings, Neural-DISCO improves the balance between edit fidelity and residual preservation relative to classical Generalized Linear Model (GLM) baselines. Further, on traditional conditional-generation tasks Neural-DISCO performs competitively with strong neural encoding baselines.

Our contribution is a source-conditioned framework for neural activity editing and label-specific attribution. Neural-DISCO combines label-specific embeddings with a noisy residual bottleneck characterized by a capacity bound, and experiments on controlled simulation and two real neural recording datasets support selective editing, source preservation, and label-specific neuron attribution.


\section{Related work}
\textbf{Neural latent-variable models and behavior-conditioned generation.}
Latent-variable methods such as GPFA, LFADS, and demixed PCA summarize population activity in low-dimensional representations \parencite{yu2009gaussian,pandarinathInferringSingletrialNeural2018,kobak2016demixed}. Recent generative models, including LDNS, NEDS, pi-VAE, and GNOCCHI, model neural activity conditionally on behavioral labels or learned neural codes \parencite{kapoor2024latent,neds,pivae,mccart2024diffusion}. Neural-DISCO instead targets source-conditioned editing: the edited sample should encode a target label while retaining residual content from a particular source response.

\textbf{Residual neural structure and behaviorally targeted dynamics.}
Single-trial neural variability reflects internal state, spontaneous movement, unmeasured task variables, and recording noise \parencite{stringer2019spontaneous,musall2019single,urai2022large,steinmetz2019distributed,international2025brain}. Related methods identify task-specific subspaces \parencite{aoi2020multidimensional}, separate behaviorally relevant from other neural dynamics \parencite{psid,tndm,ipsid,sani2024dissociative}, or distinguish shared and private neural--behavioral representations \parencite{gondurMultimodalGaussianProcess2026,yi2025sharedae}. Neural-DISCO is complementary: it provides an editable factorization that makes label encodings directly addressable while preserving residual source-specific structure.

\textbf{Disentangled representation learning and counterfactual editing.}
Disentangled representation learning traditionally includes approaches based on constrained VAEs, mutual-information objectives, and factorial latent priors \parencite{higgins2017beta,chen2016infogan,kim2018factorvae}. However, unsupervised disentanglement is not identifiable without inductive biases or supervision \parencite{locatello2019challenging}, motivating our supervised factorization. Counterfactual or attribute editing in generative models often aims to vary selected attributes or labels while preserving source content \parencite{lample2017fader,huang2018munit,lee2018drit}. Our bottleneck design is closest in spirit to LORD and biolord, which use asymmetric noise regularization to separate observed labels from residual latents \parencite{lord,biolord}. Neural-DISCO adapts this principle to neural spike counts with a Poisson observation model, multiple behavioral or stimulus labels, source-conditioned editing, and neuron-level attribution.

\textbf{Neuron-level interpretation.}
Classical neuroscience interprets neurons through tuning and modulation analyses relating firing rates to stimuli or behavioral state \parencite{hubel1962receptive,georgopoulos1982relations,georgopoulos1986neuronal,niell2010modulation}. Recently, xCEBRA provides a complementary approach to identifiable attribution maps for neural time series \parencite{xcebra}. BrainDiVE synthesizes images predicted to activate cortical regions \parencite{luo2023braindive,henderson2026diffusion}, while MIG-Vis visualizes the stimulus semantics of neural latent groups through diffusion \parencite{wang2026migvis}. These methods produce stimulus images; Neural-DISCO predicts edited neural responses while retaining source residual structure.


\section{Methods}
\label{sec:methods}
\subsection{Neural-DISCO model}
\label{sec:model}
\textbf{Data and notation.}
Let $\mathcal D=(X,Y)$ be paired neural and behavioral or stimulus observations. Each source response $x \in X \subseteq \mathbb{R}^n$ is a vector of binned spike counts from $n$ neurons, and the paired label vector $y=(y_1,\ldots,y_K)$ contains the $K$ measured labels recorded in the same bin. A label may be scalar, vector-valued, or categorical. The editing task is: given a source response $x$ and labels $y$, replace one or more selected labels with user-specified target values while preserving unedited labels and source-specific variability not represented by the observed labels. For an edit to label $a$, $\hat y_a$ denotes its target value and $y_{-a}=(y_b)_{b\ne a}$ denotes all other labels.

\textbf{Factor-specific and residual representations.}
For each observed label $a\in\{1,\ldots,K\}$, Neural-DISCO uses a separate embedding network
\begin{equation}
    z_a = \phi_a(y_a), \qquad z_a\in\mathbb R^{d_a},
\end{equation}
where $\phi_a:\mathcal Y_a\to\mathbb R^{d_a}$ is learned. Because $z_a$ is computed only from $y_a$, it can carry only information available through that observed label, including variables statistically correlated with $y_a$ \parencite{locatello2019challenging}. Idiosyncratic source content not determined by observed labels must pass through the residual path.


A residual encoder maps the source response to the source residual latent $u = \phi_{\mathrm{res}}(x), u\in\mathbb R^m$.
The clean representation is
\begin{equation}
    z=(z_1,\ldots,z_K,u), \qquad d_{\mathrm{tot}}=\sum_{a=1}^K d_a+m.
\end{equation}
During training only, the residual latent is corrupted by Gaussian noise,
\[
\tilde u = u+\epsilon,\qquad \epsilon\sim\mathcal N(0,\sigma^2 I_m),
\qquad
\tilde z=(z_1,\ldots,z_K,\tilde u).
\]
At test time and for counterfactual editing, we use the clean residual
\(u=\phi_{\mathrm{res}}(x)\); the noisy residual \(\tilde u\) is used only to
regularize the residual channel during training.

\textbf{Observation model.}
The label-specific encoders $\phi_a$, residual encoder $\phi_{\mathrm{res}}$, and generator $G_\theta$ are implemented as feed-forward multilayer perceptrons (MLPs), with dataset-specific architectures summarized in Appendix Table~\ref{tab:hyperparams}. The generator $G_\theta:\mathbb R^{d_{\mathrm{tot}}}\to\mathbb R_{>0}^n$ maps the concatenated representation to rates,
\begin{equation}
    r_\theta(\tilde z)=G_\theta(\tilde z),
\end{equation}
with positivity enforced by a softplus output. Conditional on $\tilde z$, neurons are modeled with an independent Poisson likelihood \parencite{pandarinathInferringSingletrialNeural2018, truccoloPointProcessFramework2005},
\begin{equation}
p_\theta(x\mid \tilde z)=\prod_{\ell=1}^n \mathrm{Poisson}\big(x_\ell\mid r_{\theta,\ell}(\tilde z)\big).
\end{equation}


\textbf{Training objective.}
The model minimizes the Poisson negative log-likelihood plus a residual activation penalty, dropping the additive constants \(\sum_{\ell=1}^n \log(x_\ell^{(s)}!)\) that do not depend on model parameters.
For sample \(s\), define
\[
u^{(s)}=\phi_{\mathrm{res}}(x^{(s)}),\qquad
\tilde u^{(s)}=u^{(s)}+\epsilon^{(s)},\qquad
\tilde z^{(s)}
=
\big(\phi_1(y_1^{(s)}),\ldots,\phi_K(y_K^{(s)}),\tilde u^{(s)}\big),
\]
and \(r_\theta^{(s)}=G_\theta(\tilde z^{(s)})\). The model minimizes
\begin{align}
\mathcal L(\theta,\{\phi_a\},\phi_{\mathrm{res}})
=&\frac{1}{N}\sum_{s=1}^N \mathbb E_{\epsilon^{(s)}}
\left[
\sum_{\ell=1}^n
\left(
r_{\theta,\ell}^{(s)}
-
x_\ell^{(s)}\log r_{\theta,\ell}^{(s)}
\right)
\right]
+\lambda\frac{1}{N}\sum_{s=1}^N\|u^{(s)}\|_2^2 .
\label{eq:training_objective}
\end{align}
The expectation over noise is estimated with one Monte Carlo sample of \(\epsilon^{(s)}\) per training example in each minibatch.
The $\lambda$-selection criterion favors high validation edit fidelity and low observed-label decodability from the clean residual, averaged across ten validation folds. Dataset-specific settings are reported in Appendix~\ref{app:details}.
    

\paragraph{Capacity interpretation of the residual bottleneck.}
The residual should preserve source information needed for reconstruction while avoiding redundant storage of the observed labels. We adapt the asymmetric-noise idea in \parencite{lord, tishbyInformationBottleneckMethod2000, alemiDeepVariationalInformation2016}: observed labels enter deterministically through label-specific latents, whereas the residual path is made noisy and norm-penalized.

\begin{proposition} \textbf{Capacity of the noisy training-time residual channel.}
\label{prop:noisy-residual-capacity} Fix a trained residual encoder $\phi_{\mathrm{res}}$ and assume $\sigma>0$.
Let $S\sim \mathrm{Unif}\{1,\ldots,N\}$ be the training-sample index, and define
\[
    X_S=x^{(S)},\qquad
    Y_{a,S}=y_a^{(S)},\qquad
    U_S=\phi_{\mathrm{res}}(X_S)\in\mathbb R^m .
\]
Assume \(m\geq 1\) and let
\(\epsilon\sim\mathcal N(0,\sigma^2 I_m)\) be independent of \(S\). Define the
noisy training-time residual and empirical residual power by
\[
    \tilde U = U_S+\epsilon, \qquad
    \hat P
    =
    \frac{1}{N}
    \sum_{s=1}^N
    \|\phi_{\mathrm{res}}(x^{(s)})\|_2^2 .
\]
Then, using natural logarithms so that mutual information is measured in nats,
\begin{equation}
    I(S;\tilde U)
    \leq
    \frac{m}{2}
    \log\left(
        1+\frac{\hat P}{m\sigma^2}
    \right).
    \label{eq:capacity_bound}
\end{equation}
Moreover, for any observed factor \(Y_{a,S}\) and deterministic embedding
\(Z_{a,S}=\phi_a(Y_{a,S})\),
\begin{equation}
    I(Y_{a,S};\tilde U)
    \leq
    I(S;\tilde U),
    \qquad
    I(Z_{a,S};\tilde U)
    \leq
    I(Y_{a,S};\tilde U).
    \label{eq:capacity_dpi}
\end{equation}
\end{proposition}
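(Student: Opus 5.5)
The plan is to treat the training-time residual path as an additive white Gaussian noise channel whose input is the random vector $U_S$ and whose input power is exactly the empirical power $\hat P$. We then invoke the maximum-entropy property of the Gaussian. Since $S$ is uniform on $\{1,\ldots,N\}$, we have $\mathbb E\|U_S\|_2^2=\frac1N\sum_s\|\phi_{\mathrm{res}}(x^{(s)})\|_2^2=\hat P$. The key structural facts are that $\tilde U=U_S+\epsilon$ with $\epsilon$ independent of $S$, and that $U_S$ is a deterministic function of $S$.

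\textbf{Step 1.} First we reduce to differential entropies. Because $\tilde U$ given $S=s$ is $\mathcal N(\phi_{\mathrm{res}}(x^{(s)}),\sigma^2 I_m)$, and $\sigma>0$, $\tilde U$ has a density: it is a finite Gaussian mixture. Hence
\[
I(S;\tilde U)=h(\tilde U)-h(\tilde U\mid S)=h(\tilde U)-\tfrac m2\log(2\pi e\sigma^2),
\]
since conditioning on $S$ fixes $U_S$ and leaves only the translated noise.

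\textbf{Step 2.} Next we bound $h(\tilde U)$. Independence of $\epsilon$ and $S$ gives
\[
\mathbb E\|\tilde U\|_2^2=\hat P+m\sigma^2,
\]
because the cross term vanishes. Let $K$ be the covariance of $\tilde U$. The Gaussian maximizes entropy for a given covariance, so $h(\tilde U)\le \frac12\log((2\pi e)^m\det K)$. AM--GM on the eigenvalues gives $\det K\le(\operatorname{tr}K/m)^m$, and $\operatorname{tr}K\le \mathbb E\|\tilde U\|_2^2$, since centering can only reduce the second moment. Combining,
\[
h(\tilde U)\le \tfrac m2\log\!\big(2\pi e(\hat P/m+\sigma^2)\big).
\]
Subtracting the conditional entropy from Step 1 yields \eqref{eq:capacity_bound}.

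\textbf{Step 3.} For \eqref{eq:capacity_dpi} we use the data processing inequality on Markov chains.
\begin{itemize}
\item[(i)] $Y_{a,S}$ is a deterministic function of $S$, and $\tilde U$ depends on $(S,\epsilon)$ only through $S$, with $\epsilon$ independent. So $Y_{a,S}\to S\to\tilde U$ is a Markov chain, giving $I(Y_{a,S};\tilde U)\le I(S;\tilde U)$.
\item[(ii)] $Z_{a,S}=\phi_a(Y_{a,S})$ is a deterministic function of $Y_{a,S}$. So $Z_{a,S}\to Y_{a,S}\to\tilde U$ is a Markov chain, and DPI gives $I(Z_{a,S};\tilde U)\le I(Y_{a,S};\tilde U)$.
\end{itemize}
Because $S$, $Y_{a,S}$ and $Z_{a,S}$ are discrete (finitely supported) and $\tilde U$ is continuous, these mutual informations are well defined. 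DPI holds in this mixed setting, for example via the general measure-theoretic definition of mutual information.

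\textbf{Main obstacle.} The only delicate point is Step 2. We must justify the max-entropy bound using the non-centered second moment $\hat P$ rather than a variance, which the trace inequality $\operatorname{tr}K\le\mathbb E\|\tilde U\|^2$ handles. We must also make sure $h(\tilde U)$ is finite and well defined. This follows because $\tilde U$ is a finite mixture of nondegenerate Gaussians. Everything else is standard.
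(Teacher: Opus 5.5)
Your proof is correct and follows the paper's argument essentially step for step. Both use the decomposition $I(S;\tilde U)=h(\tilde U)-h(\epsilon)$, the Gaussian maximum-entropy bound combined with AM--GM on the determinant, and the data processing inequality applied to successive deterministic functions of $S$. The only cosmetic difference is in the trace bound: you bound the trace of the covariance of $\tilde U$ by $\hat P+m\sigma^2$, while the paper bounds $\mathrm{tr}(\Sigma_U)\le\hat P$ and then adds $\sigma^2 I_m$, which is the same inequality.
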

The proof is given in Appendix~\ref{app:proof}. Proposition~\ref{prop:noisy-residual-capacity}
applies only to the randomized residual channel \(S\mapsto\tilde U\) used during
training, and does not directly bound the clean residual \(S\mapsto U_S\) which is used at test time. The result does not guarantee disentanglement, but formalizes why training with the information bottleneck reduces the information capacity available for residual leakage. Decoding probes show that each label is less decodable from the residual when supplied to the model rather than when omitted as intended (Appendix Table~\ref{tab:allen_clean_residual}).
Identifiability results rely on structured assumptions, such as auxiliary-variable-conditioned factorial priors \parencite{khemakhem2020variational} or paired observations before and after interventions \parencite{brehmer2022weakly}. The asymmetric-noise bottleneck \parencite{lord} controls a noisy training channel without providing such identification guarantees.

\subsection{Source-conditioned counterfactual editing}
\label{sec:editing}

Given a source pair $(x,y)$, Neural-DISCO computes
\begin{equation}
    z_a=\phi_a(y_a),\quad a=1,\ldots,K,\qquad u=\phi_{\mathrm{res}}(x).
\end{equation}
To edit label $a$, only that label-specific latent is replaced,
\begin{equation}
    \hat z_a=\phi_a(\hat y_a), \quad \hat z =(z_1,\ldots,z_{a-1},\hat z_a,z_{a+1},\ldots,z_K,u).
\end{equation}
The counterfactual rate vector is
\begin{equation}
\hat r=G_\theta(\hat z).
\end{equation}
When sampled spikes are required,
\begin{equation}
    \hat x_\ell\sim\mathrm{Poisson}(\hat r_\ell),\qquad \ell=1,\ldots,n.
\end{equation}

\paragraph{Counterfactual semantics.}
We use ``counterfactual'' operationally. Outputs should be interpreted as model-based edits, not as identified causal effects from observational recordings. Poisson sampling after label latent replacement yields the corresponding interventional distribution under three assumptions: the true Poisson spike generator factorizes as in Appendix~\ref{cf_semantics}; interventions leave the unedited labels and source state fixed; and the model recovers the true label maps, generator, and source-state encoding.


\subsection{Neuron-level label attribution}
\label{sec:attribution}
We quantify label-specific contributions to each neuron's pre-softplus generator output using integrated gradients \parencite{intgrad}, along a straight path from zeroed observed labels and a zero residual to the source input. We first average signed attributions across 32 randomly selected source samples, then take the mean absolute value over each label's input dimensions or the residual dimensions. Appendix~\ref{ig} gives the precise definition.


\subsection{Baselines and ablations}
\label{sec:baselines}
We compare with a no regularization baseline (w/o reg.), a zero residual baseline, pi-VAE \parencite{pivae}, and NEDS \parencite{neds}, and Gaussian and Poisson GLMs. The GLMs linearly project the observed labels to predict neural activity under their corresponding distributions. The Gaussian GLM is included as a linear editing baseline; we use it to edit by replacing the projection of the source label with the projection of the target label. The zero residual control removes Neural-DISCO's source residual during editing. The no regularization baseline is an architecture-matched ablation without any information bottleneck in the residual ($\lambda=\sigma=0$). pi-VAE, NEDS and the Poisson GLM do not expose the same source-conditioned residual-editing interface. For counterfactual evaluation, we therefore use their target-conditioned generation interface with the edited labels. These comparisons test whether source-conditioned residual preservation provides benefits beyond strong target-conditioned baselines; they should not be read as claiming that these baselines were designed for the same editing interface. 

For the Allen Visual Coding evaluation, speed editors share the training-fold mean-speed target; methods within each task share decoders trained on real training activity. Checkpoints are selected using validation folds. Scores from the held-out test epoch are averaged across folds, then across sessions for the 32-session analysis.

\section{Experiments}
\label{sec:experiments}
We evaluate whether Neural-DISCO supports source-conditioned counterfactual editing, as well as standard label-conditioned generation. The experiments address three questions. First, when the data-generating factors and counterfactual firing rates are known, does the model change the requested label while preserving non-target and unobserved structure? Second, in real motor cortical recordings without ground-truth counterfactual spikes, does generated activity decode the target behavioral label rather than remain source-like? Third, in a larger visual cortex dataset, can the same factorization average out a specified behavioral label or edit the encoding of a specific stimulus label, while preserving held-out stimulus and behavioral information, and producing unit-level attributions aligned with independent biological metrics?

For our non-simulated datasets, there is no ground-truth neural response for performed edits. We therefore use an indirect decoding-based evaluation strategy. We use decoders trained only on real neural activity data to evaluate whether generated activity encodes both the target label, as well as labels unobserved by the model which should be conserved through the residual latent.

Unless otherwise stated, values are means across ten folds; $\pm$ values and error bars denote the standard error of the mean (SEM). For our Allen Visual Coding dataset editing, we report the mean and SEM across 32 sessions. Appendix~\ref{app:details} gives preprocessing and training settings; Appendices~\ref{app:mixed_selectivity} and~\ref{app:data_requirements} report mixed-selectivity and sampling controls.

\begin{figure}[!b]
  \centering
  \includegraphics[width=\textwidth]{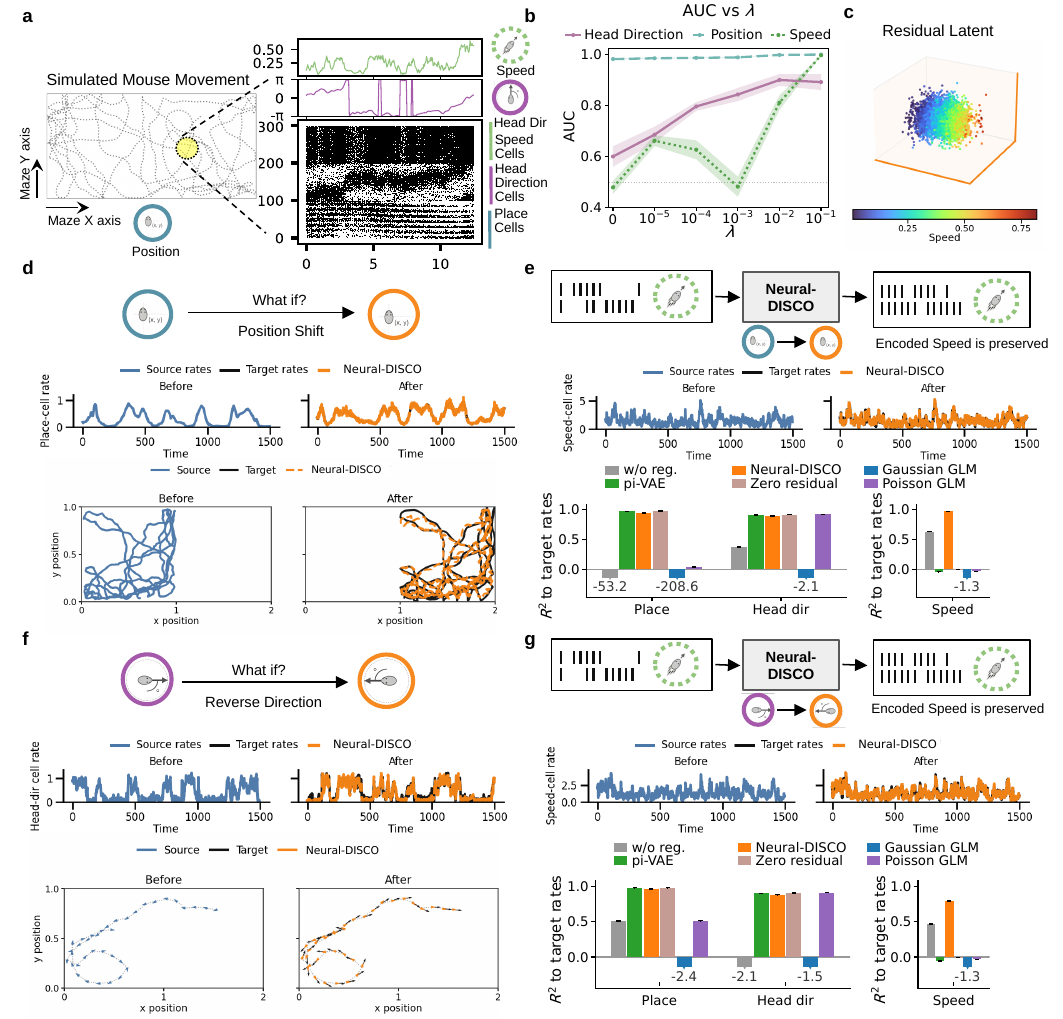}
  \caption{\textbf{Controlled RatInABox evaluation.} (a) Simulated movement with place, head-direction and speed cells. (b) Cell-type recovery across $\lambda$; speed-cell recovery uses residual attribution because speed is withheld. (c) Residual PCA colored by speed. (d,e) Position shift while retaining head direction and speed. (f,g) Head-direction reversal while retaining position and speed. Generated neural activity is scored against simulator targets.}
  \label{fig:rib}
\end{figure}
\subsection{RatInABox simulation}
\label{sec:rib}
We first evaluate Neural-DISCO on a controlled RatInABox simulation dataset, where an agent moves in a $2\times1$ meter enclosed environment and both functional cell classes and ground-truth counterfactual firing rates are known \parencite{ratinabox}. The population contains 100 place cells \parencite{o1971hippocampus}, 100 head-direction cells \parencite{taube1990head}, and 100 speed cells \parencite{kropff2015speed}, whose rates depend on position, head-direction and speed respectively. Neural-DISCO receives position and head direction as observed labels, while speed is withheld and can only be represented through the residual latent. This creates a direct test of both label-specific editing and residual preservation.


\textbf{Attribution recovers simulated cell classes.}
For each label, we use the corresponding neuron-level attribution score to classify the matching cell type and report area under the ROC curve (AUC). Position attribution should identify place cells, head-direction attribution should identify head-direction cells, and residual attribution should identify speed cells as speed is unobserved. Cell-type recovery is higher at the strongest bottleneck than at $\lambda=0$ (Fig.~\ref{fig:rib}b). The residual representation is also organized by held-out speed (Fig.~\ref{fig:rib}c), consistent with the residual pathway preserving source-specific structure not supplied by the observed labels.

\textbf{Counterfactual edited rates match ground-truth targets.}
The simulator allows direct comparison between generated rates and the true rates under an edited behavioral label state. We evaluate two edits. In the position-shift edit, the source trajectory is confined to the left half of the environment and the position label is shifted to the right half, while head direction and speed are unchanged. The desired output should match target place-cell rates, while preserving source speed-cell and head-direction cell rates. Neural-DISCO follows the shifted place-cell target while preserving speed-cell activity, whose corresponding behavior is unobserved, through the residual latent (Fig.~\ref{fig:rib}d,e). In the head-direction edit, head direction is reversed while position and speed are unchanged. The model should change head-direction-cell activity, preserve place-cell activity, and preserve speed-cell activity through the residual. Neural-DISCO again tracks the edited target rates for the manipulated label while retaining non-target and held-out components (Fig.~\ref{fig:rib}f,g). The no regularization model without the bottleneck shows the expected failure mode: the residual pathway can carry label information, so replacing an observed-label embedding does not reliably make the decoded activity target-like.
As an additional diagnostic, we train variants in which each behavioral variable is withheld in turn and must be represented through the residual. These diagnostic tests yield the same conclusion. Appendix~\ref{rat_witheld} reports these withheld-factor tests; Appendix~\ref{app:rat_unseen_combinations} tests generalization when position--heading combinations are withheld during training.

\begin{figure}[!ht]
  \centering
  \includegraphics[width=\textwidth]{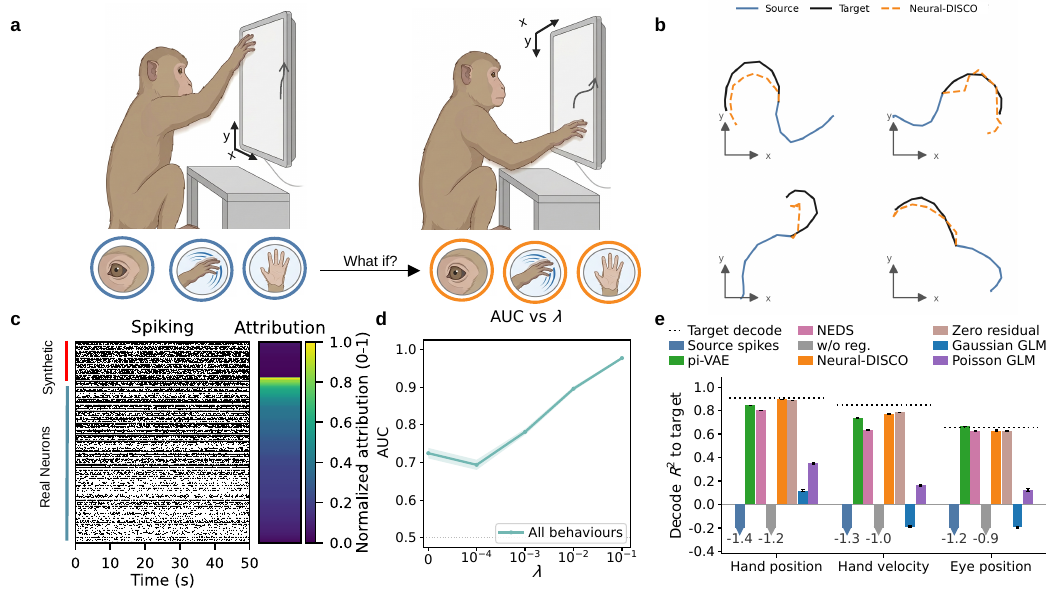}
  \caption{\textbf{Counterfactual editing in macaque reaching electrophysiology.} (a) Editing replaces hand-position, hand-velocity and eye-position blocks while retaining the source residual. (b) Trajectories decoded from source spikes, target spikes and Neural-DISCO counterfactual activity. (c,d) Synthetic constant-rate controls test whether attribution distinguishes recorded neurons from controls. (e) Target-decoding $R^2$ comparisons; dotted lines show the empirical target-spike decoding reference.}
  \label{fig:mcmaze}
\end{figure}
\subsection{Macaque reaching electrophysiology}
\label{sec:mcmaze}

We next evaluate Neural-DISCO on real motor and premotor cortical recordings from a center-out reaching task in the Neural Latents Benchmark \parencite{pei2022neurallatentsbenchmark21}. The dataset contains 137 recorded units. Unlike the simulator, the recordings contain unobserved variability, recording noise, and correlated behavioral structure, and there is no ground-truth neural response for a hypothetical edited reach. We therefore evaluate edited activity through behavioral decoding rather than direct rate error. We also benchmark label-conditioned generators and use inconsistent behavioral targets to check whether decoding simply tracks the requested labels (Appendix~\ref{app:implausible_targets}).

\textbf{Edited activity decodes as the target reach.}
For each held-out source trial, we select a target trial with matched task structure and a different reach trajectory. Here we jointly replace the hand-position, hand-velocity, and eye-position latents with target-trial label latents to test whether the generator can produce a target-like behavioral episode while retaining the source residual. A decoder based on CEBRA \parencite{cebra} trained only on real neural spikes is applied to source spikes, target spikes, and generated activity. We compute target-decoding $R^2$ against the target behavior. Source-spike decoding is a negative control, and target-spike decoding is an empirical reference for the performance achieved by this decoder on real data.

As a result, Neural-DISCO counterfactuals decode close to target trajectories, with $R^2=0.895\pm0.002$ for hand position, $0.772\pm0.003$ for hand velocity, and $0.630\pm0.004$ for eye position. Target-spike decoding reaches $0.909$, $0.845$, and $0.656$, respectively, whereas source spikes decode poorly against the target trajectories. The unregularized ablation remains source-like and yields negative target-decoding $R^2$, consistent with label information leaking through the residual pathway (Fig.~\ref{fig:mcmaze_no_reg_decode_r2}). Neural-DISCO substantially improves over the unregularized control and performs competitively with pi-VAE and NEDS across decoded variables (Fig.~\ref{fig:mcmaze}e). The decoded examples in Fig.~\ref{fig:mcmaze}b show the same qualitative behavior: generated trajectories follow the target reach rather than the source reach, while the unregularized ablation remains source-like (Fig.~\ref{fig:mcmaze_no_reg_reach_trajectories}).

\textbf{Synthetic control neurons provide an attribution sanity check.}
This dataset does not provide ground-truth behavioral importance labels for individual units. We therefore add synthetic constant-rate control neurons as a negative-control population. These controls are Poisson spike trains with constant behavior-independent firing rates matched to the dataset mean firing rate. They should receive low attribution to behavior, whereas recorded motor and premotor units are more likely to contain movement-related structure \parencite{georgopoulos1982relations,georgopoulos1986neuronal}. We use the total attribution to all three behaviors in our analyses. Real-neuron recovery AUC increases with $\lambda$ over the positive range (Fig.~\ref{fig:mcmaze}d). This pattern is consistent with the counterfactual results: bottlenecking the residual discourages leakage of observed behavioral information and makes label-specific attribution more accurate. This is a sanity check rather than biological ground truth, because not every recorded unit is expected to be modulated by the observed labels.

\begin{figure}[!b]
  \centering
  \includegraphics[width=\textwidth]{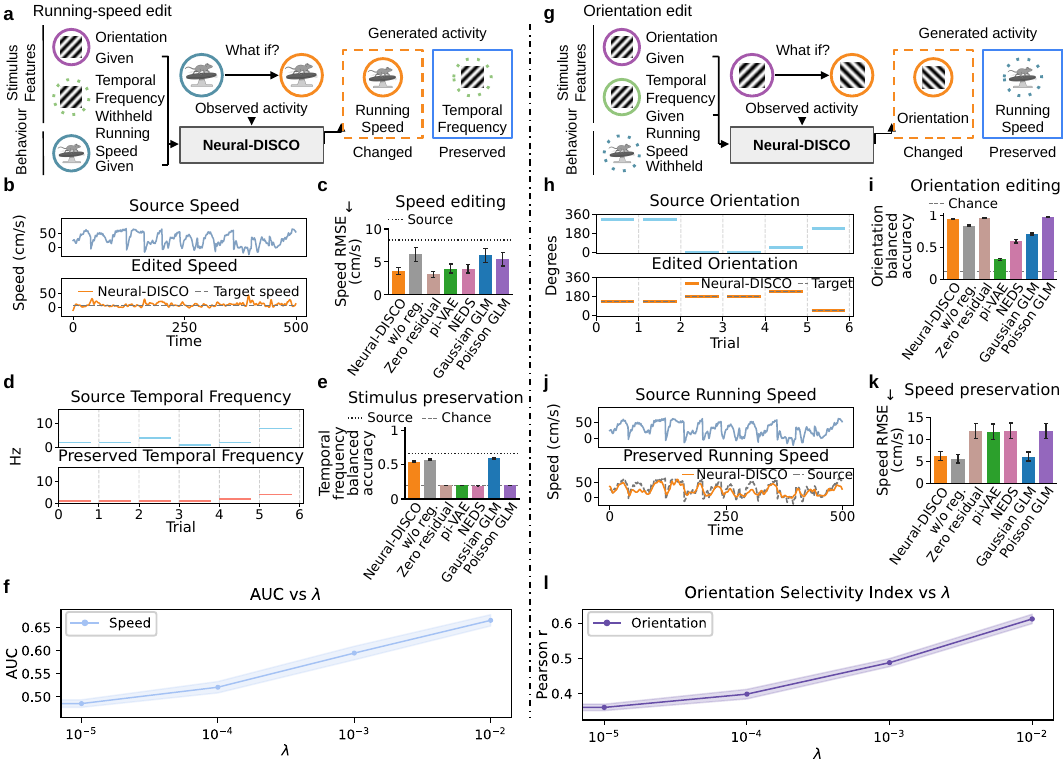}
  \caption{\textbf{Allen Visual Coding evaluation.} (a,g) Speed- and orientation-edit schematics. (b,c) Speed-edit example and speed-target RMSE. (d,e) Temporal-frequency preservation example and balanced accuracy after speed editing. (h,i) Orientation-edit example and target-orientation balanced accuracy. (j,k) Running-speed preservation example and speed RMSE after orientation editing. Editing summaries are mean $\pm$ SEM across 32 session means, each averaging ten folds. (f,l) Running-modulation AUC and orientation-selectivity correlation across $\lambda$. In summary bars, Source denotes decoding of recorded activity.}
  \label{fig:allen}
\end{figure}
\subsection{Allen Visual Coding}
\label{sec:allen}

Finally, we evaluate Neural-DISCO on 32 recording sessions from the Allen Visual Coding Neuropixels dataset \parencite{siegleSurveySpikingMouse2021,AllenSDK_Neuropixels}. These recordings sample multiple visual and related brain areas while mice view visual stimuli and run on a wheel, a behavioral variable known to modulate visual cortical responses \parencite{niell2010modulation}. We focus on drifting-grating stimulus epochs. 

\textbf{Speed editing and stimulus preservation.}
We use running speed and grating orientation as observed labels. The temporal frequency (TF) of the gratings is intentionally withheld from the observed-label embeddings, so temporal-frequency information must be preserved through the residual pathway. We edit running speed to its mean in each training fold, using decoders trained only on real training activity. Across 32 sessions, Neural-DISCO reduces speed-target RMSE to $3.629\pm.567$ cm/s, compared with $6.017\pm1.044$ for Gaussian GLM (Fig.~\ref{fig:allen}c). Models and decoders are fit using ten-fold cross-validation within each session; these values summarize session means, with SEM across sessions.

We then test whether information not supplied as an observed label is preserved after the speed edit. Temporal frequency is withheld during Neural-DISCO training, and a logistic-regression decoder trained on real activity predicts the source temporal frequency from edited activity. Across the same 32 sessions, temporal-frequency balanced accuracy is $.541\pm.013$ for Neural-DISCO and $.597\pm.012$ for Gaussian GLM, both above chance (Fig.~\ref{fig:allen}e). Neural-DISCO achieves lower speed-target RMSE, whereas Gaussian GLM preserves more temporal-frequency information (Appendix Table~\ref{tab:allen_edit_preservation}).

\textbf{Orientation editing and stimulus preservation.}
Across the same 32 sessions, we train a model with orientation and TF observed while running speed is withheld. We edit by rotating orientation by $180^\circ$. Target-orientation and source-TF balanced accuracies are $.949\pm.006$ and $.823\pm.012$, versus $.714\pm.016$ and $.532\pm.015$ for Gaussian GLM (Fig.~\ref{fig:allen}i; Appendix~\ref{app:allen_orientation}; mean $\pm$ SEM across session means). The unregularized ablation has lower orientation fidelity and TF preservation than Neural-DISCO. Figures~\ref{fig:allen}j,k show source-speed preservation; Appendix~\ref{app:allen_orientation} reports all seven methods and additional stimulus edits.

\textbf{Attributions align with dataset-provided unit metrics.}
The Allen dataset provides unit metrics that can be used as independent proxies for attribution quality. We classify units with significant running modulation using the running-modulation $p$-value, thresholding at $p<0.05$, and compute the AUC obtained from running-attribution scores. For stimulus tuning, we compare orientation-attribution scores with the global orientation selectivity index (gOSI) \parencite{AllenSDK_Neuropixels}. As the residual bottleneck is strengthened, running-attribution AUC and orientation-attribution correlation both increase (Fig.~\ref{fig:allen}f,l).

\section{Discussion}
\label{sec:discussion}
Neural-DISCO provides a source-conditioned interface for counterfactual editing of neural population activity. The key design choice is a supervised factorization with a bottlenecked residual: observed labels remain directly addressable, while source-specific variability that is not supplied through those labels can still be preserved. This design targets two common failure modes. A purely label-conditioned generator cannot preserve residual structure from source spiking, whereas an unconstrained residual pathway can leak the edited label and prevent the edit from taking effect \parencite{lord}.


The experiments support this edit--preserve trade-off. In RatInABox, where edited rates and functional cell classes are known, Neural-DISCO edits position or head direction while preserving withheld speed, and its attributions recover simulated cell classes. In macaque reaching recordings, generated activity decodes as the target reach, whereas the unregularized ablation remains source-like. In the multi-session Allen Visual Coding dataset, Neural-DISCO is able to edit orientation and speed while preserving unobserved labels. Further, its label attributions align with dataset-provided unit metrics. 

%

\textbf{Neural-DISCO for Neuroscience.}
Neuronal responses to stimuli are not determined solely by the presented stimulus, but are modulated by latent aspects of brain state such as arousal, locomotion, and attention \parencite{niell2010modulation, pakan2018action, mcginley2015waking}. Neural-DISCO addresses this by providing a flexible editing framework in which the relationship between observed labels and neural activity is conditioned on the preserved source-specific residual latent. Two source responses with identical labels can therefore yield different counterfactual edits if they differ in their residual neural state. We believe this capability can make Neural-DISCO a useful tool for neuroscientists seeking to understand how both observed and unobserved aspects of brain state modulate neural representations. We explore this on the Allen Visual Coding dataset by reversing the editing operation: we keep the label inputs fixed, replace a source response's residual with that of another response with the same orientation and similar running speed, and measure how each neuron's predicted orientation tuning changes (Appendix~\ref{app:hypothesis_generation}). Across neurons, we find that the variability in Neural-DISCO's predicted firing rates over different source residuals recovers the extent to which temporal frequency, which is withheld from the model, modulates each neuron's firing. This proof of concept demonstrates that Neural-DISCO counterfactuals can expose testable hypotheses about how unobserved factors of brain state modulate neural responses.

\textbf{Limitations.}
Neural-DISCO counterfactuals are model-based predictions under a learned factorization, not causal effects from observational data alone \parencite{Holland01121986}. As in prior counterfactual modeling \parencite{sauer2021counterfactualgenerativenetworks}, interpretations are most reliable within the training support and when edited labels can vary independently; edits violating dependencies among behavior, stimuli, and internal state should be treated as extrapolations. Attribution scores likewise measure model sensitivity rather than proving causal or exclusive encoding by individual neurons. The residual bottleneck can introduce a trade-off: a weak residual channel may harm reconstruction and preservation of unobserved labels, whereas an overly strong residual channel can leak observed-label information and make edits ineffective. Our capacity bound and sensitivity analyses characterize this trade-off but do not guarantee disentanglement, especially under correlated labels, missing variables, or dataset shift, consistent with known limitations of disentanglement without sufficient inductive bias or supervision \parencite{locatello2019challenging}. The model is static, without temporal or transition dynamics. Finally, in real neural data, validation remains indirect because decoder-based assays cannot replace ground-truth edited responses from perturbation experiments, matched repeated trials, or more realistic simulators.

\textbf{Broader Impact.} Neural-DISCO is intended for \textit{in-silico} hypothesis generation in computational neuroscience, using available neural recordings. By exposing editable behavioral and stimulus labels while preserving source-specific residual structure, it provides a model-based interface for prioritizing hypotheses about neural population coding before pursuing more costly targeted analyses or perturbation experiments.

\begin{ack}
This work was supported by start-up funding from the University of British Columbia and a Discovery Grant from the Natural Sciences and Engineering Research Council of Canada [RGPIN-2025-04460] to X.T. This research was undertaken, in part, thanks to funding from the Canada Research Chairs Program [CRC-2024-00164]. Research infrastructure used in this work was supported by the Canada Foundation for Innovation through the John R. Evans Leaders Fund [project 45547] and by the British Columbia Knowledge Development Fund. This research was also enabled in part by support provided by the Digital Research Alliance of Canada.

\textbf{Competing interests.} The authors declare no competing interests.
\end{ack}

\section*{References}
{\small
\printbibliography[heading=none]
}

\appendix
\section{Additional experimental and implementation details}
\label{app:details}

Code and reproduction instructions:\\
\url{https://github.com/TangLab-UBC/Neural-DISCO-neurips}.

\subsection{Additional RatInABox Figures}
\label{rat_witheld}
\textbf{Withheld Behavior Flows Through Residual}
We perform a diagnostic test to verify that held-out behavior is routed through the residual latent. We train three models on our RatInABox dataset, each receiving two observed labels while the third behavioral variable is held out. The three settings correspond to withholding position, head direction, or speed. In Fig.~\ref{fig:withheld_latent}, we visualize the residual latent across all models and find that the residual is consistently organized by the withheld behavior, while the observed behaviors are not strongly separated. In Fig.~\ref{fig:rib_withheld_decoding}, we quantify the decodability of each label from the residual latent, again showing that the withheld behavior is more decodable from the residual than the observed behaviors. Together, these support our findings that the residual latent carries information primarily about unobserved source-specific factors.

\textbf{RatInABox Attribution Heatmaps.}
In Fig.~\ref{fig:rib_heatmap}, we plot attribution heatmaps learned by our model for all RatInABox neurons. We include both the standard model, and the unregularized model without the noisy residual bottleneck. We observe the expected block structure: in the standard model, position, head-direction, and residual attributions are concentrated on place, head-direction, and speed cells, respectively. In contrast, the unregularized residual control shows broader residual attribution and weaker factor specificity.

\textbf{Example Counterfactual Rates across Cell Types}. In Fig.~\ref{fig:rib_rates} we plot the firing rates of three example cells, one from each cell type, before and after both counterfactual edits. In the position-shift edit, the model predicts the target place-cell rate while maintaining the source head-direction and speed-cell rates. In the head-direction edit, generated rates track the target head-direction response while retaining place- and speed-cell structure. All three cell types are compared with their simulator target rates.

\textbf{Ablated Model Position Shift Decoding} In Fig.~\ref{fig:rat_position_shift_ablation}, we plot the source and target position shift movement trajectories, alongside our model's counterfactual prediction and our ablated model's counterfactual prediction. To extract trajectories from our model's predictions, we train a position decoder on real neural activity, and then decode from both model's predictions. While our model successfully encodes the target trajectory, the ablated model decoded trajectory looks much less similar to the target. Quantitative decoding results for this evaluation are in Fig.~\ref{fig:rib}e.

\subsection{Additional Macaque Reaching Figures}
In Fig.~\ref{fig:mcmaze_no_reg_decode_r2} we plot the decoding $R^2$ of our ablated model against both source and target trajectories for our macaque reaching edit task. Across behaviors, we observe that the ablated models' encoded neural activity remains source like - decoding each behavior reveals high $R^2$ against the original behaviors, while low, negative $R^2$ against the edit targets. This is further visible in Fig.~\ref{fig:mcmaze_no_reg_reach_trajectories}, where we show four example trajectories including the original, target, and ablated model prediction through decoding. Again, we observe that the ablated model encodes the source trial's trajectory, failing to perform the counterfactual edit.  This further demonstrates that without the noisy information bottleneck from regularization, editing can fail - likely due to behavior specific information flowing through the residual.

\subsection{Additional Allen Visual Coding Figures}
\textbf{Orientation Attribution Heatmaps} In Fig.~\ref{fig:allen_heatmap}, we plot the orientation attribution for all units in session 756029989, sorted by general orientation selectivity index (gOSI) for both regularized and un-regularized models. We see a trend with higher attributed units having greater gOSI values. Colour corresponds to normalized attribution, whose calculation is detailed in Appendix~\ref{ig}.

\subsection{Integrated Gradients}
\label{ig}
For a trained model and clean representation $z=(z_1,\ldots,z_K,u)$, let $g_\ell(z)$ be neuron $\ell$'s generator output before the softplus rate transformation. We compute Integrated Gradients \parencite{intgrad} using a joint baseline over labels and the residual latent,
\[
b = (b_{y_1}, \ldots, b_{y_K}, b_{\mathrm{res}})
\]
 where $b_y=(b_{y_1},\ldots,b_{y_K})$ and $b_{\mathrm{res}}$ are the label and residual baselines respectively (we use zero baselines unless otherwise stated).

We define a straight-line interpolation path in the joint space:
\[
(y(\alpha), u(\alpha)) = (b_y, b_{\mathrm{res}}) + \alpha\big((y,u) - (b_y, b_{\mathrm{res}})\big), \quad \alpha \in [0,1],
\]
and the corresponding latent path
\[
z(\alpha) = (\phi_1(y_1(\alpha)), \ldots, \phi_K(y_K(\alpha)), u(\alpha)).
\]

The label-level attribution for factor $y_a$ is defined via the chain rule through the embedding $z_a=\phi_a(y_a)$ as
\begin{equation}
    \mathrm{IG}_{\ell,y_a}(z;b)
    = (y_a - b_{y_a}) \odot \int_0^1 
    \nabla_{y_a} g_\ell\big(z(\alpha)\big)\, d\alpha.
\end{equation}

The residual attribution is computed with respect to $u$ along the same path:
\begin{equation}
    \mathrm{IG}_{\ell,\mathrm{res}}(z;b)
    = (u - b_{\mathrm{res}})\odot \int_0^1 
    \nabla_u g_\ell\big(z(\alpha)\big)\, d\alpha.
\end{equation}

For $S=32$ randomly selected source samples per fold, held fixed across the $\lambda$ sweep within that fold, we average signed attributions before taking magnitudes:
\begin{equation}
 A_{\ell,y_a}=\frac{1}{q_a}\left\|\frac{1}{S}\sum_{s=1}^{S}\mathrm{IG}_{\ell,y_a}(z^{(s)};b)\right\|_1,\qquad
 A_{\ell,\mathrm{res}}=\frac{1}{m}\left\|\frac{1}{S}\sum_{s=1}^{S}\mathrm{IG}_{\ell,\mathrm{res}}(z^{(s)};b)\right\|_1,
\end{equation}
where $q_a$ is the number of input dimensions of factor $y_a$ and $m$ is the residual dimension. We use Captum's Integrated Gradients with 50 Gauss--Legendre integration steps and zero baselines.

For attribution visualizations in Fig.~\ref{fig:rib_heatmap} and Fig.~\ref{fig:allen_heatmap}, and for the attribution summaries in Fig.~\ref{fig:mcmaze}c,d, we min--max normalize the dataset-level attribution scores for each factor across neurons to the range \([0,1]\).

\[
A^{\mathrm{norm}}_{\ell,y_a}(z)
= \frac{A_{\ell,y_a}(z) - \min_{i} A_{i,y_a}(z)}
       {\max_{i} A_{i,y_a}(z) - \min_{i} A_{i,y_a}(z)}
\]
\subsection{Counterfactual Semantics}\label{cf_semantics}
The edited sample has a causal interpretation only under additional assumptions. Suppose the data are generated from observed factors \(Y=(Y_1,\ldots,Y_K)\)
and a source-specific unobserved factor \(H\), and suppose an intervention on
\(Y_a\) leaves \((Y_{-a},H)\) fixed. If the true conditional spike distribution factorizes as
\begin{equation}
    p^\star(x\mid y,h)=\prod_{\ell=1}^n \mathrm{Poisson}\left(x_\ell\mid G_\ell^\star(\phi_1^\star(y_1),\ldots,\phi_K^\star(y_K),\psi^\star(h))\right),
\end{equation}
if Neural-DISCO recovers these maps, and if the residual encoder recovers $\psi^\star(h)$ for the source sample, then replacing $\phi_a(y_a)$ by $\phi_a(\hat y_a)$ while holding $(z_{-a},u)$ fixed produces samples from
\begin{equation}
    p^\star\big(x\mid \mathrm{do}(Y_a=\hat y_a),Y_{-a}=y_{-a},H=h\big).
\end{equation}
The learned residual \(u=\phi_{\mathrm{res}}(x)\) is intended to encode
source-specific information associated with \(h\), but it is not the same object
as the latent generative factor \(H\).
When these assumptions are not justified, Neural-DISCO counterfactuals are model-based predictions under the learned factorization. Edits far outside the training support, or edits that violate natural dependencies among labels and internal state, should be treated as extrapolations.

\subsection{Hyperparameter selection and reporting}
We selected dataset-specific hyperparameters using validation diagnostics and preliminary pilot runs, then fixed one configuration for each dataset--model pair before running the cross-validation experiments reported in the main text.  The final configurations are reported in Table~\ref{tab:hyperparams}. The $\lambda$-selection criterion favors high validation edit fidelity and low supplied-label decodability from the clean residual, averaged across ten validation folds. Reconstruction loss provides an additional diagnostic; checkpoint selection uses validation loss.

The unregularized condition is an architecture-matched ablation in which the residual bottleneck is disabled by setting both the residual activation penalty and residual noise to zero. This ablation tests the necessity of the bottleneck mechanism and was not used to choose the reported Neural-DISCO hyperparameters.

We also report sweeps over the residual regularization coefficient $\lambda$ to evaluate how bottleneck strength affects attribution and residual leakage, as in Fig.~\ref{fig:allen}, Fig.~\ref{fig:mcmaze}, and Fig.~\ref{fig:rib}. These sweeps are sensitivity analyses rather than hyperparameter-selection curves for the reported cross-validation results. Initialization and training use seed $314159+\text{fold index}$, held fixed across $\lambda$. Fold indices run from 0 to 9. The sweeps retain residual noise $\sigma=.02$ for RatInABox and Allen and $.04$ for MC Maze, including at $\lambda=0$; the zero-penalty point therefore differs from the $\lambda=\sigma=0$ ablation.

\subsection{Evaluation metrics}
Unless otherwise stated, reported values are calculated on the test set, and averaged across the 10 CV-fold models. Uncertainty is reported as the standard error of the mean across folds. If a metric $m_k$ is computed on fold-models $k\in\{1,\dots,10\}$, we report with $K=10$:
\begin{equation}
\bar m = \frac{1}{K}\sum_{k=1}^{K} m_k,
\qquad
\mathrm{SEM}(m) = \frac{s_m}{\sqrt{K}},
\end{equation}
where $s_m$ is the sample standard deviation of the fold-wise values.

\paragraph{RatInABox.}
For attribution, we treat the neuron-level attribution score for a given factor as a classifier score for the corresponding synthetic cell type and report area under the ROC curve (AUC). Writing $s_i$ for the attribution score of neuron $i$ and $y_i\in\{0,1\}$ for whether that neuron belongs to the relevant synthetic cell class, the AUC is
\begin{equation}
\mathrm{AUC} = \Pr(s^+ > s^-) + \tfrac{1}{2}\Pr(s^+ = s^-),
\end{equation}
where $s^+$ and $s^-$ denote scores sampled from the positive and negative classes, respectively. Position attribution is evaluated against place-cell labels, head-direction attribution against head-direction-cell labels, and residual attribution against speed-cell labels because speed is withheld from the observed-factor embeddings. We use scikit-learn's sklearn.metrics.roc\_auc\_score to compute AUC values empirically. For counterfactual editing, we compare generated rates \(\hat r_{\ell t}\) with simulator ground-truth edited target rates \(r^{\star}_{\ell t}\). For a neuron group \(\mathcal I\), we report
\begin{equation}
R^2_{\mathcal I}
= \frac{1}{|\mathcal I|}\sum_{\ell\in\mathcal I}
\left[1 - \frac{\sum_{t=1}^{T}(\hat r_{\ell t}-r^{\star}_{\ell t})^2}
{\sum_{t=1}^{T}(r^{\star}_{\ell t}-\bar r^{\star}_{\ell})^2}\right],
\end{equation}
where
\[
\bar r^{\star}_{\ell}=\frac{1}{T}\sum_{t=1}^{T}r^{\star}_{\ell t}.
\] We also check qualitatively that edited-factor responses move toward the target while non-target and withheld components are preserved.

\paragraph{MC Maze.}
Because there is no ground-truth counterfactual neural response in the real recordings, we evaluate generated activity through downstream behavioral decoding. A CEBRA decoder trained only on real training-set spikes predicts the target hand position, hand velocity, and eye position from source spikes, target spikes, and generated activity. For a decoded trajectory $\hat y_t$ and target trajectory $y_t^{\star}$, we report the coefficient of determination
\begin{equation}
R^2 = \frac{1}{d}\sum_{j=1}^{d}
\left[1-\frac{\sum_{t=1}^{T}(\hat y_{tj}-y_{tj}^{\star})^2}
{\sum_{t=1}^{T}(y_{tj}^{\star}-\bar y_j^{\star})^2}\right],
\end{equation}
where $d=2$ is the number of coordinates and $\bar y_j^{\star}=\frac{1}{T}\sum_{t=1}^{T} y_{tj}^{\star}$. Thus, $R^2$ is averaged over coordinates. Source-spike decoding serves as a negative control and target-spike decoding serves as an empirical reference for the decoder on real data. For attribution, we again summarize performance by the AUC for distinguishing recorded units from synthetic constant-rate control neurons using the corresponding factor-attribution scores.

\paragraph{Allen Visual Coding.}
For running-speed edits, a decoder trained on real training-set activity predicts running speed from generated responses, and we report root mean squared error (RMSE) relative to the edited target speed $v_t^{\star}$:
\begin{equation}
\mathrm{RMSE} = \sqrt{
\frac{1}{T}
\sum_{t=1}^{T}
(\hat v_t-v_t^\star)^2
}.
\end{equation}
To test preservation of withheld stimulus information, we train a logistic-regression decoder on real activity to predict temporal frequency and report balanced accuracy on edited responses,
\begin{equation}
\mathrm{BA} = \frac{1}{C}\sum_{c=1}^{C} \frac{\mathrm{TP}_c}{\mathrm{TP}_c + \mathrm{FN}_c},
\end{equation}
where $C$ is the number of temporal-frequency classes. For attribution, we use two complementary summaries: AUC for classifying units with significant running modulation from running-attribution scores, and the Pearson correlation between orientation-attribution scores $a_i$ and the dataset-provided global orientation selectivity indices $g_i$,
\begin{equation}
\rho(a,g) = \frac{\sum_i (a_i-\bar a)(g_i-\bar g)}{\sqrt{\sum_i (a_i-\bar a)^2}\sqrt{\sum_i (g_i-\bar g)^2}}.
\end{equation}

\subsection{Edit--preservation score}
\label{app:edit_preservation_score}
Adapting harmonic-mean summaries used in model-editing evaluation \parencite{meng2022rome,meng2023memit}, we summarize edit fidelity and preservation as
\begin{equation}
S_{\mathrm{EP}} = \frac{2EP}{E+P},
\qquad [x]_0^1 = \min\{1,\max\{0,x\}\},
\end{equation}
where $[x]_0^1$ clips $x$ to $[0,1]$, and $S_{\mathrm{EP}}=0$ when $E+P=0$. Both components lie in $[0,1]$; a high score requires both a successful edit and preservation of the remaining information. We retain the underlying measurements in Tables~\ref{tab:rat_edit_preservation} and~\ref{tab:allen_edit_preservation} so that failures of either component remain visible.

For the RatInABox edits, speed is withheld from the observed-factor embeddings. $E=[R^2_{\mathrm{edited}}]_0^1$ evaluates the edited cell population against the exact simulator target. Preservation is the mean of the clipped $R^2$ scores for the two other populations,
\begin{equation}
P = \tfrac{1}{2}\left([R^2_{\mathrm{other}}]_0^1 + [R^2_{\mathrm{speed}}]_0^1\right).
\end{equation}
Thus, the head-direction and speed populations enter preservation for a position edit. In the head-direction edit, head-direction cells define $E$, while place and speed cells define $P$; all components use the edit's simulator target rates. For the Allen speed-to-mean edit,
\begin{equation}
E = \left[1-\frac{\mathrm{RMSE}_{\mathrm{edited}}}{\mathrm{RMSE}_{\mathrm{source}}}\right]_0^1,
\qquad
P = \left[\frac{\mathrm{BA}_{\mathrm{edited}}-0.2}{\mathrm{BA}_{\mathrm{source}}-0.2}\right]_0^1,
\end{equation}
where both RMSE values are relative to the target speed and $\mathrm{BA}$ is temporal-frequency balanced accuracy. Chance accuracy is $0.2$ for the five-way decoder; we set $P=0$ if source accuracy is at or below chance. We do not assign $S_{\mathrm{EP}}$ to the original MC Maze reach edit because all three measured factors are edited together, leaving no independently measured non-edited factor to define preservation.

The Gaussian GLM editor adds its fitted source-to-target label effect to the recorded response, $x_{\mathrm{edit}}=x_{\mathrm{source}}+W(y_{\mathrm{edit}}-y_{\mathrm{source}})$, with negative outputs clipped to zero. The Poisson GLM instead predicts rates from the target labels, $\hat r(y)=\exp(Wy+b)$. In RatInABox, both GLMs use raw position and heading components $(x,y,h_x,h_y)$ as covariates. The zero residual control retains the trained Neural-DISCO model and sets its source residual to zero at edit time. Because the Gaussian GLM edits recorded spike counts, its output keeps spike-count noise, whereas the other methods predict mean rates; all are scored against simulator mean rates.

\begin{table}[t]
\centering
\small
\setlength{\tabcolsep}{4pt}
\caption{RatInABox editing against exact simulator target trajectories. Population $R^2$ averages neuron-wise scores. All three populations are evaluated against simulator target rates in both edits. Values and joint scores are computed per fold and reported as mean $\pm$ SEM across ten folds. Each component is clipped before computing the edit--preservation score.}
\label{tab:rat_edit_preservation}
\resizebox{\textwidth}{!}{%
\begin{tabular}{lrrrr}
\toprule
Method & Edited $R^2$ & Other observed $R^2$ & Speed $R^2$ & $S_{\mathrm{EP}}$ \\
\midrule
\multicolumn{5}{l}{\textit{Position shift}} \\
Neural-DISCO & $0.940\pm0.004$ & $0.888\pm0.001$ & $0.964\pm0.001$ & $0.933\pm0.002$ \\
Zero residual & $0.976\pm0.002$ & $0.911\pm0.0002$ & $-0.004\pm0.001$ & $0.621\pm0.0004$ \\
Gaussian GLM edit & $-208.575\pm0.391$ & $-2.116\pm0.001$ & $-1.258\pm0.002$ & $0.000\pm0.000$ \\
Poisson GLM & $0.036\pm0.012$ & $0.914\pm0.00002$ & $-0.034\pm0.002$ & $0.065\pm0.019$ \\
\midrule
\multicolumn{5}{l}{\textit{Head-direction reversal}} \\
Neural-DISCO & $0.885\pm0.001$ & $0.966\pm0.002$ & $0.794\pm0.001$ & $0.883\pm0.001$ \\
Zero residual & $0.911\pm0.0002$ & $0.986\pm0.001$ & $-0.028\pm0.001$ & $0.640\pm0.001$ \\
Gaussian GLM edit & $-1.513\pm0.0002$ & $-2.436\pm0.001$ & $-1.339\pm0.002$ & $0.000\pm0.000$ \\
Poisson GLM & $0.914\pm0.00002$ & $0.516\pm0.001$ & $-0.032\pm0.002$ & $0.402\pm0.0006$ \\
\bottomrule
\end{tabular}}
\end{table}

We analyze 32 Allen Brain Observatory sessions, retaining units with $\mathrm{SNR}>1$ and an ISI-violation score $<0.5$. Within each session, we use ten folds and edit running speed toward its mean in the fold's training data. All methods share speed and temporal-frequency decoders trained on real activity in the same training fold. Both GLMs are trained for 100 epochs with learning rate $10^{-3}$, weight decay $10^{-5}$, and seed $314159+\text{fold index}$.

\begin{table}[t]
\centering
\small
\setlength{\tabcolsep}{6pt}
\caption{Allen speed editing and temporal-frequency preservation across 32 sessions, using a common fold-training mean target. Metrics and $S_{\mathrm{EP}}$ are computed per fold and averaged over ten folds within each session; values are mean $\pm$ SEM across the 32 session means. Lower speed RMSE and higher TF balanced accuracy and $S_{\mathrm{EP}}$ are better.}
\label{tab:allen_edit_preservation}
\begin{tabular}{lrrr}
\toprule
Method & Speed RMSE (cm/s) & TF balanced accuracy & $S_{\mathrm{EP}}$ \\
\midrule
Neural-DISCO & $3.629\pm0.567$ & $0.541\pm0.013$ & $0.592\pm0.026$ \\
w/o reg. & $6.182\pm1.067$ & $0.570\pm0.013$ & $0.391\pm0.030$ \\
Zero residual & $3.114\pm0.462$ & $0.199\pm0.0004$ & $0.012\pm0.001$ \\
pi-VAE & $4.004\pm0.684$ & $0.197\pm0.0008$ & $0.010\pm0.001$ \\
NEDS & $3.979\pm0.645$ & $0.194\pm0.0008$ & $0.007\pm0.001$ \\
Gaussian GLM edit & $6.017\pm1.044$ & $0.597\pm0.012$ & $0.372\pm0.040$ \\
Poisson GLM & $5.444\pm1.042$ & $0.198\pm0.0007$ & $0.009\pm0.001$ \\
\bottomrule
\end{tabular}
\end{table}

Neural-DISCO has the highest joint score for both RatInABox edits. Zeroing its residual maintains edit fidelity but loses withheld speed information. Across the 32 Allen sessions, Neural-DISCO has a higher mean joint score than Gaussian GLM and exceeds it in 30 of 32 sessions. The Gaussian editor has higher mean temporal-frequency balanced accuracy, whereas Neural-DISCO has lower mean speed RMSE; zeroing the residual reduces temporal-frequency decoding to approximately chance. These results distinguish fitting a requested target from preserving source information.

\subsection{Editing toward inconsistent behavioral targets}
\label{app:implausible_targets}
To test whether decoding of generated activity tracks requested labels regardless of their consistency, we retain the source residual and the requested target hand-position and eye-position trajectories, but rotate the target two-dimensional hand velocity. Rotations preserve every velocity magnitude while making velocity increasingly inconsistent with the unchanged hand-position trajectory. The model is evaluated without training on the rotated targets. Within each fold, we apply a fixed CEBRA decoder trained only on real activity to Poisson spike samples from the generated rates. We average clockwise and counterclockwise results within each fold at intermediate angles, then report mean $\pm$ SEM across ten folds.

\begin{table}[t]
\centering
\small
\caption{MC Maze decoding under velocity rotations. Position and eye targets remain unchanged.}
\label{tab:implausible_targets}
\begin{tabular}{rrrr}
\toprule
Velocity rotation & Velocity $R^2$ & Hand-position $R^2$ & Eye-position $R^2$ \\
\midrule
$0^\circ$ & $0.772\pm0.003$ & $0.895\pm0.002$ & $0.630\pm0.004$ \\
$45^\circ$ & $0.618\pm0.002$ & $0.845\pm0.002$ & $0.578\pm0.003$ \\
$90^\circ$ & $0.344\pm0.006$ & $0.744\pm0.004$ & $0.484\pm0.005$ \\
$135^\circ$ & $0.160\pm0.011$ & $0.647\pm0.005$ & $0.410\pm0.004$ \\
$180^\circ$ & $0.102\pm0.009$ & $0.603\pm0.008$ & $0.388\pm0.008$ \\
\bottomrule
\end{tabular}
\end{table}

Requested-velocity decoding declines progressively as the supplied velocity becomes less compatible with the fixed position trajectory. Hand and eye decoding also decline, although less strongly. The control therefore shows degradation under inconsistent behavioral requests with collateral effects on the unchanged targets, rather than selective rejection of velocity alone. This is evidence against indiscriminate decoder readout of supplied labels; it does not establish biological validity of the generated responses or causal effects of the edits.

\subsection{Orientation and temporal-frequency editing}
\label{app:allen_orientation}
For orientation edits, the observed factors are grating orientation and temporal frequency (TF), and running speed is withheld. We rotate orientation by $180^\circ$ across the same 32 sessions used for speed editing. Models and evaluation decoders are trained on the training presentations in each fold, with model selection on separate validation presentations. Edits are evaluated on the final drifting-grating epoch. Neural-DISCO retains its source residual; the zero residual control sets this residual to zero at edit time. The unregularized ablation, pi-VAE and NEDS use the same orientation and TF inputs. Training settings are given in Table~\ref{tab:hyperparams}.

We compute edit fidelity $E$ as the mean chance-corrected balanced accuracy over edited stimulus factors. Chance levels are $1/8$ for orientation and $1/5$ for TF. Preservation $P$ averages the chance-corrected accuracy retained relative to source decoding for each unedited stimulus factor and the running-speed component $(\mathrm{RMSE}_0-\mathrm{RMSE}_{\mathrm{edited}})/(\mathrm{RMSE}_0-\mathrm{RMSE}_{\mathrm{source}})$, where the RMSEs are speed-decoding errors on edited and source activity and $\mathrm{RMSE}_0$ is the error of always predicting the mean held-out speed. The speed component is zero when source decoding is no more accurate than this constant prediction. Each component is clipped to $[0,1]$ before averaging, and $S_{\mathrm{EP}}=2EP/(E+P)$. When both stimulus factors are edited, preservation measures running speed alone.

\begin{table}[t]
\centering
\small
\setlength{\tabcolsep}{4pt}
\caption{Orientation editing and source preservation across 32 Allen sessions. Metrics and $S_{\mathrm{EP}}$ are computed per fold and averaged over ten folds within each session; values are mean $\pm$ SEM across the 32 session means. All methods share decoders trained on real fold-training activity and are evaluated on the same held-out presentations and targets. Speed RMSE measures error relative to source speed. Lower speed RMSE and higher balanced accuracy and $S_{\mathrm{EP}}$ are better.}
\label{tab:allen_orientation_components}
\begin{tabular}{lrrrr}
\toprule
Method & Orientation BA & TF BA & Speed RMSE (cm/s) & $S_{\mathrm{EP}}$ \\
\midrule
Neural-DISCO & $0.949\pm0.006$ & $0.823\pm0.012$ & $6.268\pm1.017$ & $0.838\pm0.022$ \\
w/o reg. & $0.842\pm0.012$ & $0.693\pm0.013$ & $5.613\pm0.933$ & $0.807\pm0.024$ \\
Zero residual & $0.961\pm0.005$ & $0.855\pm0.013$ & $11.943\pm1.741$ & $0.656\pm0.002$ \\
Gaussian GLM & $0.714\pm0.016$ & $0.532\pm0.015$ & $6.119\pm1.015$ & $0.705\pm0.023$ \\
Poisson GLM & $0.976\pm0.003$ & $0.914\pm0.008$ & $11.894\pm1.751$ & $0.660\pm0.001$ \\
pi-VAE & $0.317\pm0.015$ & $0.620\pm0.018$ & $11.743\pm1.720$ & $0.263\pm0.013$ \\
NEDS & $0.601\pm0.030$ & $0.709\pm0.020$ & $11.958\pm1.763$ & $0.399\pm0.021$ \\
\bottomrule
\end{tabular}
\end{table}

Neural-DISCO has higher orientation fidelity and TF preservation than Gaussian GLM, pi-VAE, NEDS and the unregularized ablation. Poisson GLM and the zero residual control predict the stimulus labels accurately but do not preserve source-speed information.

\paragraph{Additional single-session stimulus edits.}
In session 756029989, we also advance TF cyclically through $1,2,4,8,15$ Hz or jointly edit orientation and TF. Table~\ref{tab:allen_additional_stimulus_edits} reports the component and joint scores for these edits.

\begin{table}[t]
\centering
\small
\setlength{\tabcolsep}{7pt}
\caption{Additional Allen stimulus edits in session 756029989. Values are mean $\pm$ SEM across ten folds, with $E$, $P$ and $S_{\mathrm{EP}}$ computed within each fold before aggregation. All methods share decoders trained on real fold-training activity and are evaluated on the same held-out presentations and targets. Higher values are better.}
\label{tab:allen_additional_stimulus_edits}
\begin{tabular}{lrrr}
\toprule
Method & Edit fidelity $E$ & Preservation $P$ & $S_{\mathrm{EP}}$ \\
\midrule
\multicolumn{4}{l}{\textit{Temporal-frequency shift}} \\
Neural-DISCO & $0.840\pm0.005$ & $0.895\pm0.010$ & $0.867\pm0.005$ \\
w/o reg. & $0.479\pm0.018$ & $0.956\pm0.009$ & $0.636\pm0.016$ \\
Gaussian GLM & $0.466\pm0.010$ & $0.919\pm0.003$ & $0.618\pm0.009$ \\
Poisson GLM & $0.939\pm0.002$ & $0.500\pm0.000$ & $0.653\pm0.0004$ \\
pi-VAE & $0.647\pm0.040$ & $0.145\pm0.040$ & $0.226\pm0.047$ \\
NEDS & $0.737\pm0.026$ & $0.350\pm0.076$ & $0.426\pm0.093$ \\
Zero residual & $0.874\pm0.007$ & $0.500\pm0.000$ & $0.636\pm0.002$ \\
\midrule
\multicolumn{4}{l}{\textit{Joint orientation and temporal-frequency edit}} \\
Neural-DISCO & $0.906\pm0.002$ & $0.789\pm0.020$ & $0.842\pm0.012$ \\
w/o reg. & $0.661\pm0.009$ & $0.918\pm0.018$ & $0.767\pm0.009$ \\
Gaussian GLM & $0.567\pm0.006$ & $0.933\pm0.004$ & $0.705\pm0.005$ \\
Poisson GLM & $0.958\pm0.001$ & $0.000\pm0.000$ & $0.000\pm0.000$ \\
pi-VAE & $0.439\pm0.055$ & $0.000\pm0.000$ & $0.000\pm0.000$ \\
NEDS & $0.697\pm0.082$ & $0.000\pm0.000$ & $0.000\pm0.000$ \\
Zero residual & $0.927\pm0.004$ & $0.000\pm0.000$ & $0.000\pm0.000$ \\
\bottomrule
\end{tabular}
\end{table}

The component scores distinguish successful stimulus editing from preservation of source running speed. Poisson GLM attains high edit fidelity but has a zero preservation score when orientation and TF are edited together. It generates rates from the target stimulus labels without a source residual. pi-VAE and NEDS also have zero preservation scores for this joint edit. Gaussian GLM has higher preservation than Neural-DISCO for the joint edit, while Neural-DISCO has higher edit fidelity and joint score. Zeroing the residual yields a zero joint-edit preservation score.

\subsection{Decoding omitted and supplied factors from the clean residual}
\label{app:allen_clean_residual}
We trained three Allen models, each supplied with two factors while withholding the third, using $\lambda=.01$. We extracted the clean residual without injecting noise and trained probes only on residuals from the model's training fold. Categorical probes used a training-fitted standardizer followed by balanced logistic regression; the speed probe used a training-fitted standardizer and ridge regression. Probes were evaluated on the corresponding held-out validation fold.

\begin{table}[t]
\centering
\small
\setlength{\tabcolsep}{6pt}
\caption{Decoding from the clean Allen residual. Each row represents a separately trained model; columns identify the factor decoded from its residual. Bold diagonal entries measure retention of the omitted factor, while off-diagonal entries measure decodability of supplied factors. Values are mean $\pm$ SEM across ten folds. Balanced accuracies are reported without chance correction; speed uses raw $R^2$.}
\label{tab:allen_clean_residual}
\begin{tabular}{lrrr}
\toprule
Factor withheld & Orientation BA & Speed $R^2$ & TF BA \\
\midrule
Orientation & $\mathbf{.486\pm.007}$ & $.201\pm.084$ & $.252\pm.016$ \\
Speed & $.142\pm.010$ & $\mathbf{.649\pm.034}$ & $.265\pm.013$ \\
Temporal frequency & $.163\pm.008$ & $.083\pm.104$ & $\mathbf{.550\pm.012}$ \\
\midrule
Chance / reference & $.125$ & $0$ & $.20$ \\
\bottomrule
\end{tabular}
\end{table}

The residual preferentially retains omitted-factor information, although some supplied-factor information remains. Each factor is more decodable when omitted from the observed inputs; supplied-factor decoding includes TF balanced accuracies of $.252$ and $.265$ and speed $R^2=.201\pm.084$.

\paragraph{MC Maze residual decoding.}
We decode supplied hand position, hand velocity, and eye position from the clean residual using MLP decoders on held-out data. Table~\ref{tab:mc_clean_residual} also includes source-spike decoding as a reference, using Ridge probes ($\alpha=1$). Feature standardization and training of these source-spike probes use only the training trials of each cross-validation fold; evaluation uses held-out validation trials. The clean residual is the residual encoder output without
injected noise. Table~\ref{tab:mc_clean_residual} shows low residual decoding
relative to source spikes. These decoding scores do not establish statistical independence.

\begin{table}[t]
  \centering
  \caption{Held-out decoding of supplied MC Maze covariates from the clean residual, with source-spike decoding as a reference. Values are mean $\pm$ SEM across ten folds. Source-spike decoding uses Ridge probes and uniform-average $R^2$ across the two coordinates. Negative $R^2$ is retained.}
  \label{tab:mc_clean_residual}
  \begin{tabular}{lcc}
    \toprule
    Covariate & Clean residual $R^2$ & Source spikes $R^2$ \\
    \midrule
    Hand position & $0.045 \pm 0.003$ & $0.448 \pm 0.004$ \\
    Hand velocity & $0.049 \pm 0.005$ & $0.387 \pm 0.005$ \\
    Eye position & $0.053 \pm 0.003$ & $0.364 \pm 0.007$ \\
    \bottomrule
  \end{tabular}
\end{table}

\subsection{Robustness to mixed selectivity}
\label{app:mixed_selectivity}
Motivated by mixed selectivity in navigational coding \parencite{hardcastle2017multiplexed}, we tested dependence on cleanly separated tuning by introducing multiplicative interactions into the 300-neuron RatInABox population. The three 100-neuron groups combined place $\times$ speed, head direction $\times$ position, and speed $\times$ head direction tuning. For a neuron with original rate $r_i^a(t)$ and a fixed paired neuron from another population with rate $r_{\pi(i)}^b(t)$, we used
\begin{equation}
\widetilde r_i(t)
= c_i r_i^a(t)\left[(1-\alpha)+\alpha\frac{r_{\pi(i)}^b(t)}{\overline{r_{\pi(i)}^b}}\right],
\end{equation}
where the overbar denotes the mean on the original training trajectory and $c_i$ preserves the neuron's mean training rate. We retrained Neural-DISCO at each interaction strength using the RatInABox architecture and training settings in Table~\ref{tab:hyperparams}. The simulator supplied exact counterfactual rates by combining the requested target factor with the preserved source factors.

Table~\ref{tab:mixed_selectivity} reports the mean of neuron-wise $R^2$ across all 300 neurons, followed by the edit--preservation score defined in Appendix~\ref{app:edit_preservation_score}. The $\alpha=0$ row comes from separately trained models scored against target rates constructed for this study, so values differ slightly from Table~\ref{tab:rat_edit_preservation}. For this experiment, edit fidelity uses the primary edited population, while preservation averages the other two populations' $R^2$ scores after clipping each to $[0,1]$. All components are scored against the exact edited simulator rates: neurons recruited through an interaction may change even when their primary factor is unchanged. At $\alpha=1$, paired all-population $R^2$ retained $99.17\pm0.33\%$ of its $\alpha=0$ value for position edits and $99.29\pm0.24\%$ for direction edits. These results support robustness to the simulated interactions tested here.

\begin{table}[htbp]
\centering
\small
\setlength{\tabcolsep}{5pt}
\begin{tabular}{ccccc}
\toprule
& \multicolumn{2}{c}{Position edit} & \multicolumn{2}{c}{Direction edit} \\
\cmidrule(lr){2-3}\cmidrule(lr){4-5}
$\alpha$ & All-population $R^2$ & $S_{\mathrm{EP}}$ & All-population $R^2$ & $S_{\mathrm{EP}}$ \\
\midrule
0 & $0.937\pm.002$ & $0.941\pm.003$ & $0.945\pm.001$ & $0.931\pm.001$ \\
0.25 & $0.930\pm.002$ & $0.934\pm.003$ & $0.939\pm.001$ & $0.927\pm.001$ \\
0.50 & $0.931\pm.001$ & $0.937\pm.001$ & $0.940\pm.001$ & $0.933\pm.001$ \\
0.75 & $0.930\pm.001$ & $0.936\pm.002$ & $0.941\pm.001$ & $0.936\pm.001$ \\
1 & $0.929\pm.001$ & $0.933\pm.001$ & $0.938\pm.001$ & $0.937\pm.001$ \\
\bottomrule
\end{tabular}
\caption{Neural-DISCO counterfactual performance under increasing multiplicative mixed selectivity. Values are mean $\pm$ SEM across ten independently trained cross-validation folds. The $R^2$ columns average neuron-wise scores across the full population; $S_{\mathrm{EP}}$ combines primary-population edit fidelity with performance on the other two populations.}
\label{tab:mixed_selectivity}
\end{table}

\subsection{Data and sampling requirements}
\label{app:data_requirements}
We retrained Neural-DISCO on nested, fold-specific subsets of the training data, holding validation and test sets, counterfactual edits, and training settings fixed within each dataset. Reduced RatInABox subsets were contiguous trajectory intervals that did not cross the held-out validation interval. MC Maze subsets contained complete trials balanced across 108 task conditions; Allen subsets contained complete presentations balanced across 40 orientation--temporal-frequency conditions. Evaluation decoders were held fixed across fractions: MC Maze used the full real training recording, and Allen used the complete real training fold. The Allen scaling experiment edited speed to the mean of the full training fold, held fixed across generator-training fractions.

We computed retention relative to each fold's full-data retraining and then averaged across ten folds. Retention is the reduced-data/full-data ratio for $R^2$, the ratio after subtracting the $0.20$ chance level for Allen temporal-frequency balanced accuracy, and the full-data/reduced-data ratio for Allen speed RMSE. Table~\ref{tab:data_requirements} gives the smallest tested subset whose mean paired retention reached $90\%$ for every evaluated outcome. This criterion applies to the fold mean and does not require each individual fold to exceed $90\%$.

\begin{table}[htbp]
\centering
\small
\setlength{\tabcolsep}{3pt}
\begin{tabular}{>{\raggedright\arraybackslash}p{.15\linewidth}>{\raggedright\arraybackslash}p{.23\linewidth}>{\raggedright\arraybackslash}p{.18\linewidth}>{\raggedright\arraybackslash}p{.17\linewidth}r}
\toprule
Dataset & Training subset & Condition coverage & Limiting outcome & Retention \\
\midrule
RatInABox & 3.6 min; 4,320 bins (25\%) & Contiguous interval & Position-edit $R^2$ & $95.1\pm1.9\%$ \\
MC Maze & 145 trials (9.375\%) & 108/108; median 1 trial & Velocity $R^2$ & $93.9\pm.3\%$ \\
Allen & 170 presentations (50\%) & 40/40; median 4 presentations & TF balanced accuracy & $96.3\pm2.4\%$ \\
\bottomrule
\end{tabular}
\caption{Tested training-subset requirements for retaining at least $90\%$ of full-data performance on every outcome. Retention is mean $\pm$ SEM across ten folds for the limiting outcome; the full-fold reference equals $100\%$ by definition. Median counts are per condition. These training-subset sizes assume the same held-out data and full-data evaluation decoders; the Allen speed target uses the full training fold.}
\label{tab:data_requirements}
\end{table}

For Allen, 170 presentations (50\%) met both retention criteria. At the next smaller tested subset (37.5\%; 128 presentations), TF retention was $88.4\pm3.0\%$. Retention is not clipped at $100\%$. At the next smaller tested subsets, the limiting retention fell to $78.9\pm3.0\%$ for RatInABox at 1.8 minutes and $89.4\pm.8\%$ for MC Maze at 97 trials. RatInABox tested five training fractions; MC Maze and Allen each tested nine. These results apply to the tested training subsets and do not establish universal data requirements.

Training used seed $314159+\text{fold index}$ and subset selection used seed $271828+\text{fold index}$, with 300 epochs for RatInABox and 100 epochs for each real dataset.

\subsection{Hypothesis generation from source-dependent tuning}
\label{app:hypothesis_generation}
We use Allen Visual Coding recordings to test whether a score constructed without temporal-frequency (TF) labels can prioritize neurons whose activity carries TF information in a later epoch. The models are trained with running speed and orientation; TF is excluded from Neural-DISCO training, donor matching, and score construction. Scoring uses out-of-fold responses from the first two drifting-grating epochs; its association with TF contribution is evaluated in the final epoch. Changing only the residual shows how much a neuron's predicted tuning varies with source context. The score provides a ranking for follow-up experiments, which we evaluate against TF contribution in a later epoch.

For each source bin, we choose a donor from a different presentation in the same validation fold, matching presented orientation and within-presentation time bin, with the nearest running speed. We generate two predicted tuning curves over the same eight target orientations using the source running speed in both cases: one uses the source residual and the other the donor residual. Thus, the observed inputs to the two predicted curves are identical, and only the residual differs. Two of 378 early presentations lack a valid donor and are excluded, leaving 376 presentations. For each presentation, predicted rates are averaged across its bins before calculating the squared log-rate difference between curves. The neuron score averages this difference over orientations and presentations:
\begin{equation}
    c_\ell = \frac{1}{376}\sum_{p=1}^{376}\frac{1}{8}\sum_{o=1}^{8}
    \left[\log(\bar r^{\mathrm{source}}_{\ell po}+\epsilon)
    -\log(\bar r^{\mathrm{donor}}_{\ell po}+\epsilon)\right]^2,
    \qquad \epsilon=10^{-4}.
\end{equation}

Independent TF contribution is evaluated on 222 final-epoch presentations using presentation-grouped two-fold cross-validation, repeated five times. The reduced Poisson model contains categorical orientation and linear running speed; the full model additionally contains categorical TF, and the null model contains only an intercept. Each neuron's contribution is $(D_{\mathrm{reduced}}-D_{\mathrm{full}})/D_{\mathrm{null}}$ \parencite{tseng2022shared}, where the deviances are evaluated on held-out presentations. Running-speed scaling and the null rate are fitted only within each training half and then applied to its held-out counterpart.

We compare Neural-DISCO with an out-of-fold, identity-link Gaussian GLM with categorical orientation and linear running speed. Its edit adds the GLM's predicted change to the source rates: $r_{\mathrm{edit}}=r_{\mathrm{source}}-f(y_{\mathrm{source}})+f(y_{\mathrm{target}})$. Using identical source--donor pairs, its score is the squared difference between presentation-averaged raw-rate curves, averaged over orientations and presentations. The shared target prediction cancels in this difference, leaving the paired additive residual difference. We also evaluate squared log-rate differences after flooring edited rates at $\epsilon$ before presentation averaging.

\begin{table}[t]
\centering
\caption{Hypothesis prioritization across all 672 units and the anatomical subset of 311 visual-cortex units. Spearman correlation between the early-epoch score and final-epoch TF contribution. Values are correlation $\pm$ bootstrap standard error from 2,000 presentation resamples, stratified by fold and orientation in the early epochs and by orientation and TF in the final epoch.}
\label{tab:hypothesis_generation}
\begin{tabular}{lrr}
\toprule
Editor & All units & Visual cortex \\
\midrule
Neural-DISCO & $0.386 \pm 0.030$ & $0.448 \pm 0.030$ \\
Gaussian GLM (raw-rate score) & $0.192 \pm 0.022$ & $0.186 \pm 0.027$ \\
Gaussian GLM (log-rate score) & $-0.061 \pm 0.020$ & $0.121 \pm 0.024$ \\
\bottomrule
\end{tabular}
\end{table}

The visual-cortex comparison in Table~\ref{tab:hypothesis_generation} supports using source-dependent predicted tuning to prioritize hypotheses about an omitted experimental variable. It does not identify a unique hidden cause or establish a causal effect of residual state.

\subsection{Compute resources}
\label{compute}
All experiments were run on a single H200 GPU with 141 GB RAM. Training and evaluation of Neural-DISCO and all baselines across all datasets required approximately 6 GPU hours in total. RatInABox sweeps over $\lambda$ required approximately 2 GPU hours, MC Maze $\lambda$ sweeps required approximately 1 GPU hour, Allen $\lambda$ sweeps required approximately 1 GPU hour, and baseline training and evaluation required approximately 2 GPU hours. During Neural-DISCO training, peak VRAM usage stayed below 1 GB.

\subsection{RatInABox synthetic dataset experiment details}
\paragraph{Dataset generation.}
We use the RatInABox generator~\parencite{ratinabox} to generate synthetic neural spiking activity, simulator firing rates, and associated behavioral variables. The synthetic environment has aspect ratio 2:1 and scale 1 m. Dataset generation uses random seed 42. The simulation timestep is 50 ms, the duration is 20 min, the initial position is $(0.5,0.5)$, and the mean speed is 0.2 m/s.
We generate four trajectories:
\begin{itemize}
    \item \textbf{Train:} standard random-walk trajectory.
    \item \textbf{Inverted:} the same trajectory as \texttt{train}, but temporally reversed.
    \item \textbf{Shifted:} a trajectory spatially confined to the left side of the environment, $x\leq 1$.
    \item \textbf{Cfact\_shifted:} the same as \texttt{shifted}, but with a $+1$ offset along the $x$-axis.
\end{itemize}
\paragraph{Synthetic cell types.}
The synthetic population contains 300 neurons: 100 place cells, 100 head-direction cells, and 100 speed cells.
\paragraph{Loaded arrays and observed variables.}
Each generated RatInABox file contains 24,000 bins. Spike counts and simulator rates have shape $(24000,300)$, position has shape $(24000,2)$, head direction has shape $(24000,2)$, and velocity has shape $(24000,2)$. Neural-DISCO receives position and head direction as observed editable factors. Speed is withheld from the observed-factor embeddings, so speed-related information can only be preserved through the residual latent.
\paragraph{Splits and counterfactual edits.}
RatInABox uses contiguous sample splits rather than trial-grouped splits. Each fold contains 17,280 training samples and 1,920 validation samples. The held-out test loader uses the first 20\% of the \texttt{inverted} trajectory, corresponding to 4,800 samples.
For counterfactual generation, the model keeps the source residual and swaps the configured target behavior blocks. In the position-shift edit, the target position comes from the shifted counterfactual trajectory while head direction and speed are held fixed. In the head-direction edit, head direction is reversed while position and speed are held fixed. Since the simulator provides ground-truth rates under the edited variables, generated rates are compared directly against the edit's simulator target rates by cell group: place cells, head-direction cells, speed cells, and all cells.

\subsubsection{Generalization to unseen factor combinations}
\label{app:rat_unseen_combinations}
During training, we withhold all samples whose position satisfies $x\in[1.5,2.0]$, $y\in[0.5,1.0]$, and whose heading has a non-negative $y$-component ($h_y\ge0$). Here $h_y$ is the $y$-component of the unit head-direction vector. The joint position--heading combinations are therefore absent from training. Table~\ref{tab:rat_unseen_combinations} reports position-edit fidelity and preservation of the withheld speed factor: pi-VAE achieves slightly higher position fidelity, whereas Neural-DISCO preserves speed-cell activity.

\begin{table}[t]
\centering
\small
\caption{Position editing with joint position--heading combinations withheld from training. Position-edit and speed-preservation $R^2$ average neuron-wise scores within the place-cell and speed-cell populations, respectively, against simulator target rates. Values are mean $\pm$ SEM across ten cross-validation folds.}
\label{tab:rat_unseen_combinations}
\begin{tabular}{lcc}
\toprule
Method & Position-edit $R^2$ & Speed-preservation $R^2$ \\
\midrule
Neural-DISCO & $0.940\pm0.004$ & $0.964\pm0.001$ \\
pi-VAE & $0.966\pm0.001$ & $-0.041\pm0.003$ \\
\bottomrule
\end{tabular}
\end{table}

\subsection{MC Maze Experiment Details}
\paragraph{Dataset.}
We use the MC Maze reaching dataset from the Neural Latents Benchmark / DANDI dataset \texttt{000128/0.220113.0400}, subject \texttt{sub-Jenkins}. Spikes are loaded from \texttt{train\_spikes\_heldin}; no additional unit filtering is applied beyond the NLB held-in unit selection. The dataset contains 137 recorded units together with paired behavioral trajectories. Spike counts are binned at 50~ms, yielding 14 bins per trial in the benchmark representation. We use the NLB train split for 10-fold cross-validation and reserve the public NLB validation split as the held-out test set for decoding analyses.

\paragraph{Observed factors and preprocessing.}
The observed editable factors are hand position, hand velocity, and eye position, each represented as a two-dimensional trajectory. We do not apply additional spike smoothing beyond the benchmark binning. Behavioral targets are normalized per coordinate before training and decoding.

\paragraph{Splits.}
The NLB train split contains 1,721 trials and the public validation split contains 574 trials. After flattening 14 time bins per trial, these correspond to 24,094 train-split samples and 8,036 held-out samples, respectively. Cross-validation is grouped by full trials rather than individual bins so that all time bins from a trial remain in the same fold. In fold 0, the model split contains 21,672 training samples and 2,422 validation samples.

\paragraph{Training details.}
The MC Maze Neural-DISCO model is trained for 100 epochs with train/validation/test batch sizes of 512/256/256, learning rate $10^{-3}$, weight decay $10^{-5}$, and scheduler decay factor $\gamma=1$. The behavior latent dimension is $d_y=96$ for each observed factor and the residual latent dimension is $d_{\mathrm{res}}=128$. The factor encoder, residual encoder, and generator have depths of 3/4/4, with residual penalty $\lambda=0.48$ and residual noise standard deviation $\sigma=0.04$. The unregularized control uses the same architecture and optimizer settings with $\lambda=0$ and $\sigma=0$.

\paragraph{Source-target pairing and counterfactual construction.}
For each held-out source trial, we choose a target trial with matched task context and a different reach trajectory. Pairing uses the \texttt{context\_matched} strategy: candidate targets must match the source trial on \texttt{trial\_version}, \texttt{num\_targets}, \texttt{num\_barriers}, and \texttt{active\_target}, and must differ in target direction by at least $30^\circ$. Among the eligible candidates, the target trial is chosen by nearest standardized delay and reaction time. Pairing uses seed 0. On the public validation split, this procedure retains 557 source-target pairs from 574 trials. In the main counterfactual experiment, the hand-position, hand-velocity, and eye-position blocks are jointly replaced with the target-trial values while the source residual is held fixed.

\paragraph{Decoder training and attribution controls.}
Downstream MC Maze decoders are trained only on real training-split spikes and behavior. We use a KNN decoder on CEBRA embedding with the \texttt{offset10-model}, batch size 1000, learning rate $10^{-4}$, temperature 1.0, output dimension 8, 15{,}000 training iterations, cosine distance, \texttt{time\_delta} conditioning, time offset 10, KNN with $K=36$, cosine metric, and seed 0. For paper figures, the same trained decoder is applied to source spikes, target spikes, and generated activity. Generated activity is obtained by Poisson sampling from the counterfactual Neural-DISCO rates before downstream decoding. For attribution sanity checks, recorded neurons are augmented with synthetic constant-rate control neurons, and we report AUC for distinguishing recorded units from these controls using the total attribution score to all behaviors for each neuron.

\subsection{Allen Visual Coding Neuropixels experiment details}
\paragraph{Spike binning and preprocessing.}
For the 32-session speed- and orientation-editing comparisons, we apply the preprocessing and splitting protocol below within each session. Examples are drawn from the 32-session cohort. We use only spiking and behavior during drifting-grating stimulus presentations, filtering out gray periods in between drifting stimuli. For speed editing, running speed and stimulus orientation are used as observed factors, while drifting-grating temporal frequency is withheld from the observed-factor embeddings for preservation analyses. We remove presentations with blank or invalid orientation labels. In session 756029989, the filtered dataset contains 378 training/CV presentations and 222 held-out test presentations.
Unit filtering uses strict inequalities $\mathrm{SNR}>1$ and \texttt{isi\_violations}$<0.5$, leaving 672 units in session 756029989. Spike trains and running speed are binned at 50 ms resolution over each 2 s drifting-grating presentation, yielding 40 bins per presentation.

\paragraph{Data splitting.}
Train/test splitting is by drifting-grating epoch: the first two drifting-grating epochs are used for model training and cross-validation, and the final epoch is reserved as held-out test data. Cross-validation splits are grouped by \texttt{presentation\_id} using deterministic contiguous \texttt{np.array\_split} partitions of sorted presentation ids. In fold 0 of session 756029989, the training split contains 13,600 bins from 340 presentations, and the validation split contains 1,520 bins from 38 presentations. The held-out test split contains 8,880 bins from 222 presentations.
\paragraph{Decoder training.}
For speed-edit evaluation, we train an MLP decoder only on real fold-training activity. Real train/test spikes are Gaussian-smoothed with $\sigma=2.0$ bins before decoder fitting and scoring. The speed decoder has hidden dimension 256, 3 layers, GELU nonlinearities, dropout 0.0, learning rate $10^{-3}$, weight decay $10^{-4}$, maximum training duration 300 epochs, patience 20, and seed $161803+\text{fold index}$. Its internal validation split uses seed $42+\text{fold index}$.
For temporal-frequency preservation, we train a multinomial logistic-regression decoder only on real training activity and evaluate it on edited activity
\[
\begin{aligned}
\texttt{LogisticRegression(max\_iter=2000,} \\
\texttt{solver="lbfgs", class\_weight="balanced")}
\end{aligned}
\]
\paragraph{Unit metrics for attribution experiments.}
For the experiment shown in Fig.~\ref{fig:allen}f, ground-truth labels for running modulation are derived from the Allen unit metric \texttt{run\_pval\_dg}, with units classified as significantly modulated when \texttt{run\_pval\_dg}$<0.05$. In Fig.~\ref{fig:allen}l, we compare orientation-attribution scores with the Allen-provided global orientation selectivity index, \texttt{g\_osi\_dg}.
Stimulus examples report the most frequent decoded label within each presentation.
\paragraph{Training time.}
Training the Allen 10-fold models across the reported $\lambda$ values takes approximately 1 GPU hour on an H200 GPU.
 
\begin{table*}[t]
\centering
\footnotesize
\setlength{\tabcolsep}{3pt}
\caption{Hyperparameters and training settings for the main editing experiments. For Neural-DISCO, architecture entries give factor encoder / residual encoder / generator depths. Depths count input, hidden and output layers, so depth $d$ has $d-1$ weight matrices. Batch sizes are train/validation/test. The attribution experiments sweep $\lambda$. The unregularized control uses the same architecture and training settings as Neural-DISCO within each dataset and task, with $\lambda=0$ and $\sigma=0$.}
\label{tab:hyperparams}
\textbf{Architecture and residual bottleneck}\par\smallskip
\begin{tabular*}{\textwidth}{@{\extracolsep{\fill}}>{\raggedright\arraybackslash}p{.105\textwidth}>{\raggedright\arraybackslash}p{.135\textwidth}>{\raggedright\arraybackslash}p{.190\textwidth}>{\raggedright\arraybackslash}p{.185\textwidth}>{\raggedright\arraybackslash}p{.145\textwidth}>{\raggedleft\arraybackslash}p{.06\textwidth}>{\raggedleft\arraybackslash}p{.05\textwidth}@{}}
\toprule
Dataset & Model & Labels & Latents & Architecture & $\lambda$ & $\sigma$ \\
\midrule
RatInABox & Neural-DISCO & pos, head dir & $d_y=16$, $d_{\mathrm{res}}=32$ & 3/3/3 & 0.10 & 0.02 \\
RatInABox & unregularized & pos, head dir & $d_y=16$, $d_{\mathrm{res}}=32$ & 3/3/3 & 0 & 0 \\
MC Maze & Neural-DISCO & hand pos, hand vel, eye pos & $d_y=96$, $d_{\mathrm{res}}=128$ & 3/4/4 & 0.48 & 0.04 \\
MC Maze & unregularized & hand pos, hand vel, eye pos & $d_y=96$, $d_{\mathrm{res}}=128$ & 3/4/4 & 0 & 0 \\
Allen speed & Neural-DISCO & speed, orientation & $d_y=48$, $d_{\mathrm{res}}=96$ & 3/4/2 & 0.0015 & 0.02 \\
Allen stimulus & Neural-DISCO & orientation, TF & $d_y=48$, $d_{\mathrm{res}}=96$ & 3/4/4 & 0.01 & 0.02 \\
Allen speed & unregularized & speed, orientation & $d_y=48$, $d_{\mathrm{res}}=96$ & 3/4/2 & 0 & 0 \\
Allen stimulus & unregularized & orientation, TF & $d_y=48$, $d_{\mathrm{res}}=96$ & 3/4/4 & 0 & 0 \\
RatInABox & pi-VAE & pos, head dir & $d_z=4$ & gen 60, flows 2 & -- & -- \\
MC Maze & pi-VAE & hand pos, hand vel, eye pos & $d_z=4$ & gen 60, flows 2 & -- & -- \\
Allen speed & pi-VAE & speed, orientation & $d_z=4$ & gen 60, flows 2 & -- & -- \\
Allen stimulus & pi-VAE & orientation, TF & $d_z=4$ & gen 60, flows 2 & -- & -- \\
MC Maze & NEDS & hand pos, hand vel, eye pos & hidden 256, inter 512 & 5 layers, seq 1 & -- & -- \\
Allen speed & NEDS & speed, orientation & hidden 256, inter 512 & 5 layers, seq 40 & -- & -- \\
Allen stimulus & NEDS & orientation, TF & hidden 256, inter 512 & 5 layers, seq 40 & -- & -- \\
\bottomrule
\end{tabular*}

\medskip
\textbf{Training settings}\par\smallskip
\begin{tabular*}{\textwidth}{@{\extracolsep{\fill}}>{\raggedright\arraybackslash}p{.105\textwidth}>{\raggedright\arraybackslash}p{.135\textwidth}>{\raggedleft\arraybackslash}p{.077\textwidth}>{\raggedright\arraybackslash}p{.135\textwidth}>{\raggedleft\arraybackslash}p{.100\textwidth}>{\raggedleft\arraybackslash}p{.060\textwidth}>{\raggedright\arraybackslash}p{.290\textwidth}@{}}
\toprule
Dataset & Model & Max. epochs & Batch & LR & WD & Schedule / seed / notes \\
\midrule
RatInABox & Neural-DISCO & 300 & 512/256/256 & $10^{-3}$ & $10^{-5}$ & $\gamma=0.7$; data seed 42; L2 warmup 10; residual off 10 \\
RatInABox & unregularized & 300 & 512/256/256 & $10^{-3}$ & $10^{-5}$ & $\gamma=0.7$; data seed 42 \\
MC Maze & Neural-DISCO & 100 & 512/256/256 & $10^{-3}$ & $10^{-5}$ & $\gamma=1$; pairing/decoder seed 0 \\
MC Maze & unregularized & 100 & 512/256/256 & $10^{-3}$ & $10^{-5}$ & $\gamma=1$; pairing/decoder seed 0 \\
Allen & Neural-DISCO & 100 & 512/256/256 & $10^{-3}$ & $10^{-5}$ & $\gamma=1$; milestones [20,50,100,300] \\
Allen & unregularized & 100 & 512/256/256 & $10^{-3}$ & $10^{-5}$ & $\gamma=1$; milestones [20,50,100,300] \\
RatInABox & pi-VAE & 600 & 256 & $5\times10^{-4}$ & -- & seed $42+\mathrm{fold}$; patience 40; 32 samples \\
MC Maze & pi-VAE & 600 & 256 & $5\times10^{-4}$ & -- & seed $42+\mathrm{fold}$; patience 40; 32 samples \\
Allen & pi-VAE & 600 & 256 & $5\times10^{-4}$ & -- & seed $42+\mathrm{fold}$; patience 40; 32 samples \\
MC Maze & NEDS & 80 & 128 & $2\times10^{-4}$ & 0.01 & seed $42+\mathrm{fold}$; patience 10; dropout 0.2/0.4 \\
Allen & NEDS & 4000 & 16 & $5\times10^{-4}$ & 0.01 & seed $42+\mathrm{fold}$; patience 50; dropout 0.2/0.4 \\
\bottomrule
\end{tabular*}
\end{table*}

Neural-DISCO models were trained with Adam and checkpoints selected by minimum validation loss. pi-VAE uses validation stopping with patience 40 and reloads the minimum-loss checkpoint. Allen NEDS retains checkpoints when validation loss improves by at least $10^{-4}$, with early-stopping patience 50, and uses training-fold input min--max normalization. The Allen stimulus-edit models use the same latent widths and optimizer settings, with factor/residual/generator depths 3/4/4 and $\lambda=.01$; their ablation sets $\lambda=\sigma=0$. The same pi-VAE and NEDS settings apply to speed and stimulus edits.

\begin{table*}[t]
\centering
\caption{Trainable parameter counts for the main editing models. Allen counts are shown for session 756029989 (672 units). The unregularized control (w/o reg.) has the same architecture as Neural-DISCO within each dataset and task and therefore has the same number of trainable parameters.}
\label{tab:param_counts}
\begin{tabular}{llr}
\toprule
Dataset & Model & Trainable params \\
\midrule
RatInABox & Neural-DISCO & 122,414 \\
RatInABox & w/o reg. & 122,414 \\
RatInABox & pi-VAE & 232,183 \\
MC Maze & Neural-DISCO & 308,772 \\
MC Maze & w/o reg. & 308,772 \\
MC Maze & pi-VAE & 64,435 \\
MC Maze & NEDS & 2,656,286 \\
Allen speed & Neural-DISCO & 621,504 \\
Allen stimulus & Neural-DISCO & 1,085,216 \\
Allen speed & w/o reg. & 621,504 \\
Allen stimulus & w/o reg. & 1,085,216 \\
Allen & pi-VAE & 1,024,968 \\
Allen & NEDS & 3,005,610 \\
\bottomrule
\end{tabular}
\end{table*}

\subsection{Baselines}
For baseline models, we used the authors' released implementations where available, adapting input and covariate dimensions and sequence length to our preprocessing. Hyperparameters and training settings for pi-VAE and NEDS are given in Table~\ref{tab:hyperparams}. Training and evaluation of the baselines across all experiments takes approximately 2 GPU hours on an H200 GPU.
\section{Proof of the residual capacity bound}
\label{app:proof}

\begin{proof}
Let $S\sim \mathrm{Unif}\{1,\ldots,N\}$, let $X_S=x^{(S)}$, and let
$U_S=\phi_{\mathrm{res}}(X_S)$. For
$\tilde U=U_S+\epsilon$ with independent
$\epsilon\sim\mathcal N(0,\sigma^2 I_m)$,
\begin{equation}
I(S;\tilde U)=h(\tilde U)-h(\tilde U\mid S)=h(\tilde U)-h(\epsilon).
\end{equation}
Let $\Sigma_U$ be the covariance of $U_S$. Then
$\mathrm{tr}(\Sigma_U)\le \mathbb E\|U_S\|_2^2=\hat P$. Since Gaussian random variables maximize differential entropy under a covariance constraint,
\begin{equation}
    h(\tilde U)\le \frac{1}{2}\log\left((2\pi e)^m\det(\Sigma_U+\sigma^2 I_m)\right).
\end{equation}
By the arithmetic--geometric mean inequality,
\begin{equation}
    \det(\Sigma_U+\sigma^2 I_m)
    \leq
    \left(
        \frac{\mathrm{tr}(\Sigma_U)+m\sigma^2}{m}
    \right)^m
    \leq
    \left(
        \sigma^2+\frac{\hat P}{m}
    \right)^m .
\end{equation}
Also $h(\epsilon)=\frac{m}{2}\log(2\pi e\sigma^2)$. Subtracting gives
\begin{equation}
    I(S;\tilde U)\le \frac{m}{2}\log\left(1+\frac{\hat P}{m\sigma^2}\right).
\end{equation}
Because $Y_{a,S}=y_a^{(S)}$ is a deterministic function of $S$, data processing gives
$I(Y_{a,S};\tilde U)\le I(S;\tilde U)$. Because
$Z_{a,S}=\phi_a(Y_{a,S})$ is a deterministic function of $Y_{a,S}$, another application of data processing gives
$I(Z_{a,S};\tilde U)\le I(Y_{a,S};\tilde U)$.
All logarithms are natural, so the mutual informations are measured in nats.
\end{proof}

\section{Supplemental Figures}
\begin{figure}[p]
  \centering
  \includegraphics[width=0.9\linewidth]{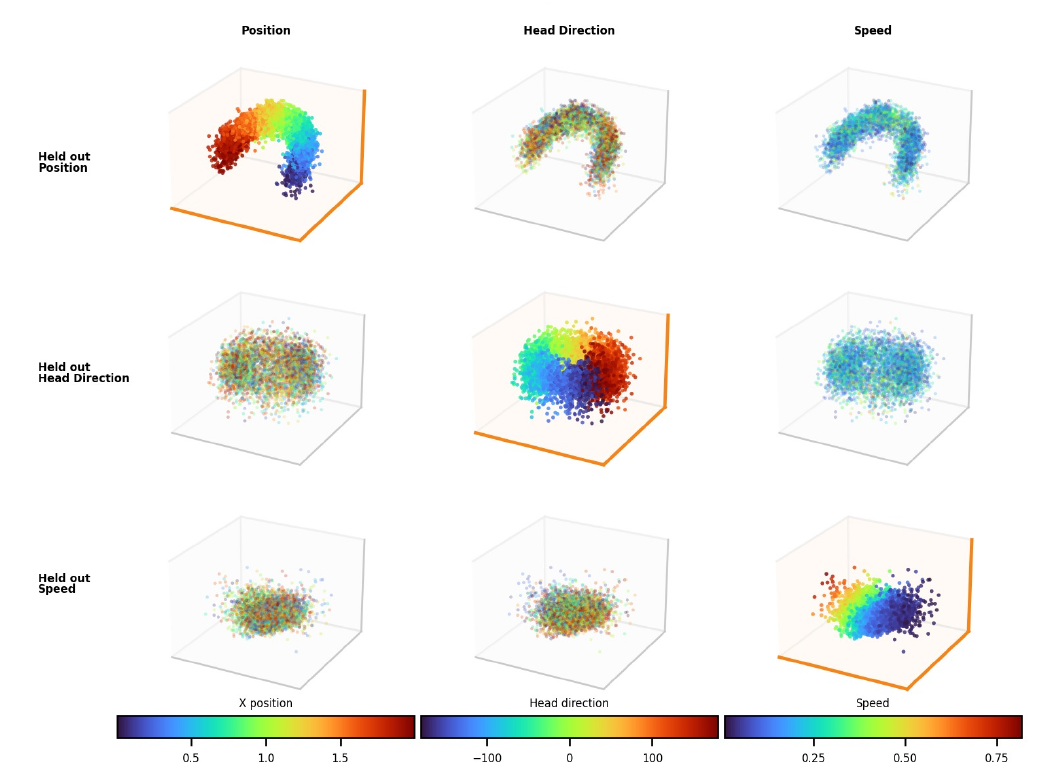}
  \caption{\textbf{Withheld-label diagnostics visualization.} Across the three held-out-label settings, the residual latent is organized by the omitted variable and not by the observed variables.}
  \label{fig:withheld_latent}
\end{figure}

\begin{figure}[t]
  \centering
  \includegraphics[width=0.6\linewidth]{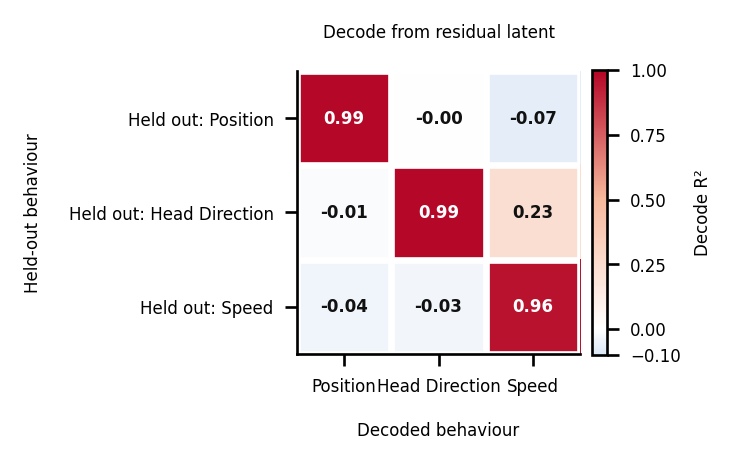}
  \caption{\textbf{Withheld-label diagnostics decoding for Rat in a Box dataset.} Residual-latent decoders recover the held-out variable much more strongly than the observed variables. }
  \label{fig:rib_withheld_decoding}
\end{figure}

\begin{figure}[t]
  \centering
  \includegraphics[width=0.6\linewidth]{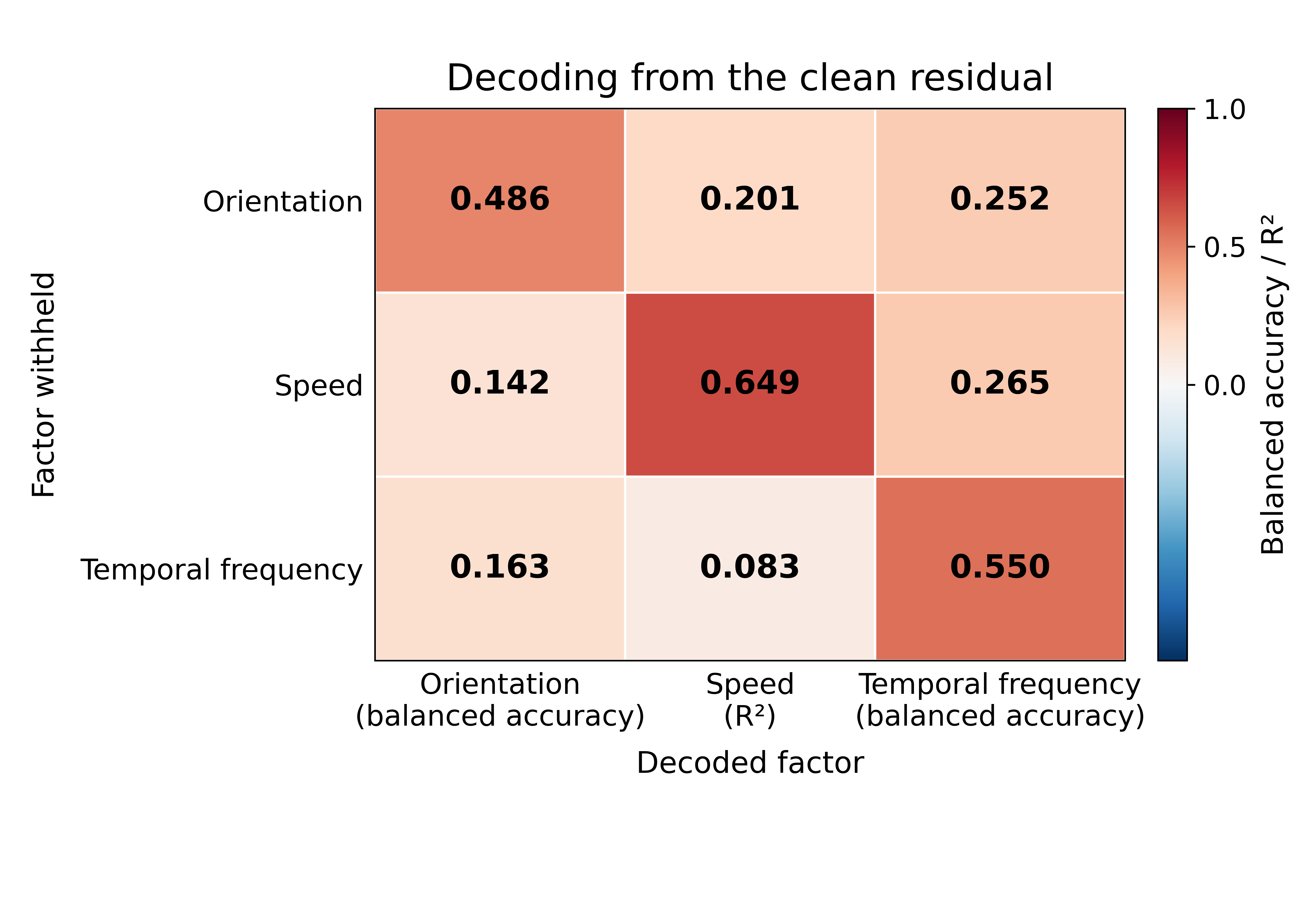}
  \caption{\textbf{Residual decoding with withheld factors on Allen.}
  Each row withholds one factor from the model's supplied variables; columns show how well each factor can be decoded from the residual latent. Chance is 0.125 for orientation, and 0.2 for temporal frequency. Each factor is more decodable when it is withheld, as reflected by the diagonal entries. The residual preferentially retains omitted-factor information, although some supplied-factor information remains (Table~\ref{tab:allen_clean_residual}).}
  \label{fig:allen_withheld_decoding}
\end{figure}

\begin{figure}[t]
  \centering
  \includegraphics[width=0.6\linewidth]{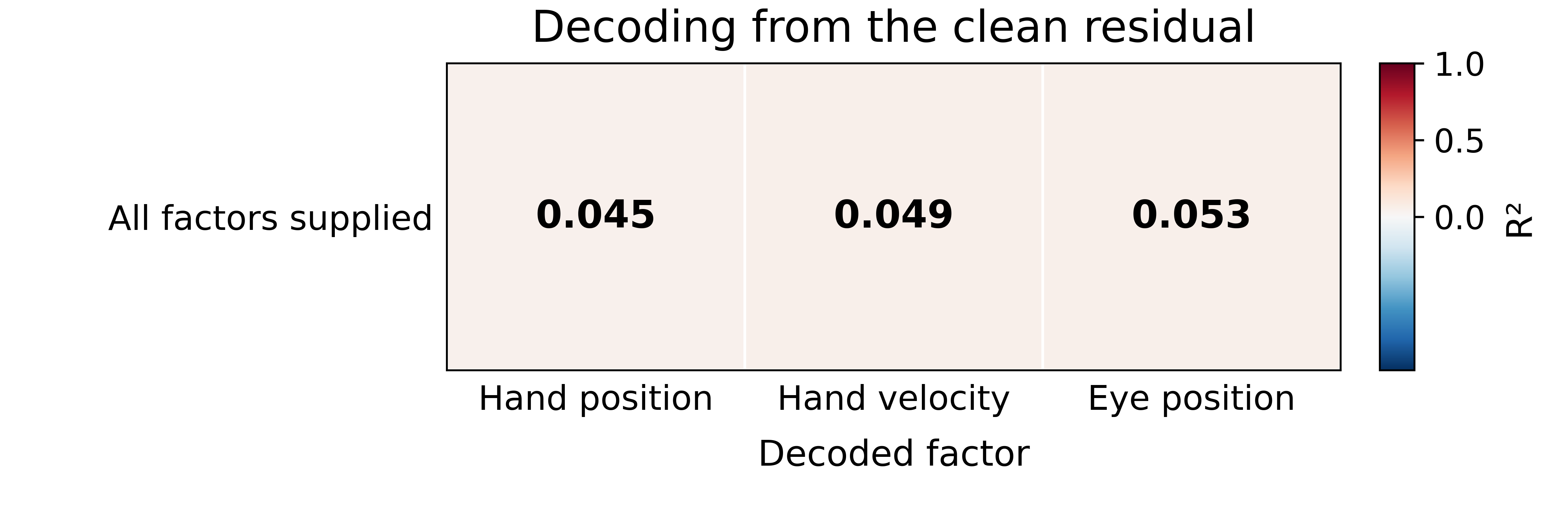}
  \caption{\textbf{Residual leakage on MC Maze.} 
  Hand position, hand velocity, and eye position are strongly correlated, as such no factor is held out; all three are supplied to the model. Leakage is therefore measured by decoding each supplied factor from the residual latent. All three factors have low held-out decoding $R^2$.}
  \label{fig:mc_maze_withheld_decoding}
\end{figure}

\begin{figure}[t]
  \centering
  \includegraphics[width=\textwidth]{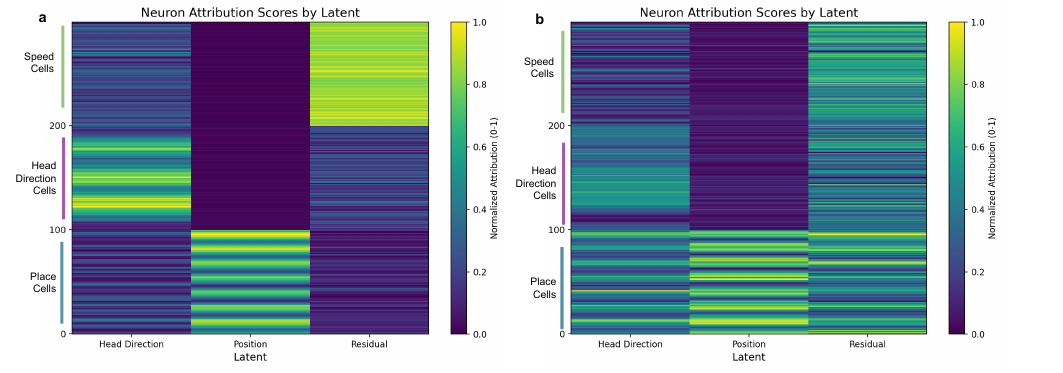}
  \caption{\textbf{RatInABox Attribution Heatmap.} Rows correspond to neurons ordered by synthetic cell type: place cells, head-direction cells, and speed cells. Columns show normalized attribution to head-direction, position, and residual blocks. (a) Attribution results of Neural-DISCO. (b) Attribution results of architecture-matched unregularized residual control.  
  }
  \label{fig:rib_heatmap}
\end{figure}

\begin{figure}[t]
  \centering
  \includegraphics[width=\textwidth]{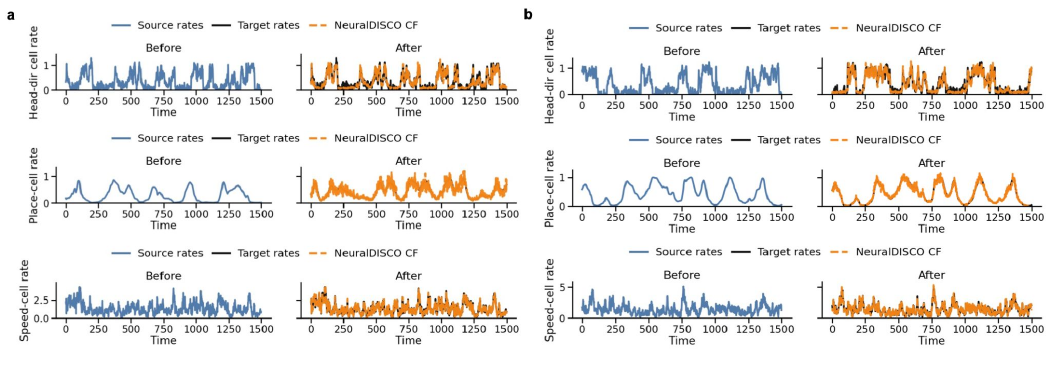}
  \caption{\textbf{RatInABox Counterfactual Rates.} Example rates for one cell of each type in (a) the position-shift edit and (b) the head-direction edit. Generated rates are compared with simulator target rates for each population.}
  \label{fig:rib_rates}
\end{figure}

\begin{figure}[t]
  \centering
  \includegraphics[width=\textwidth]{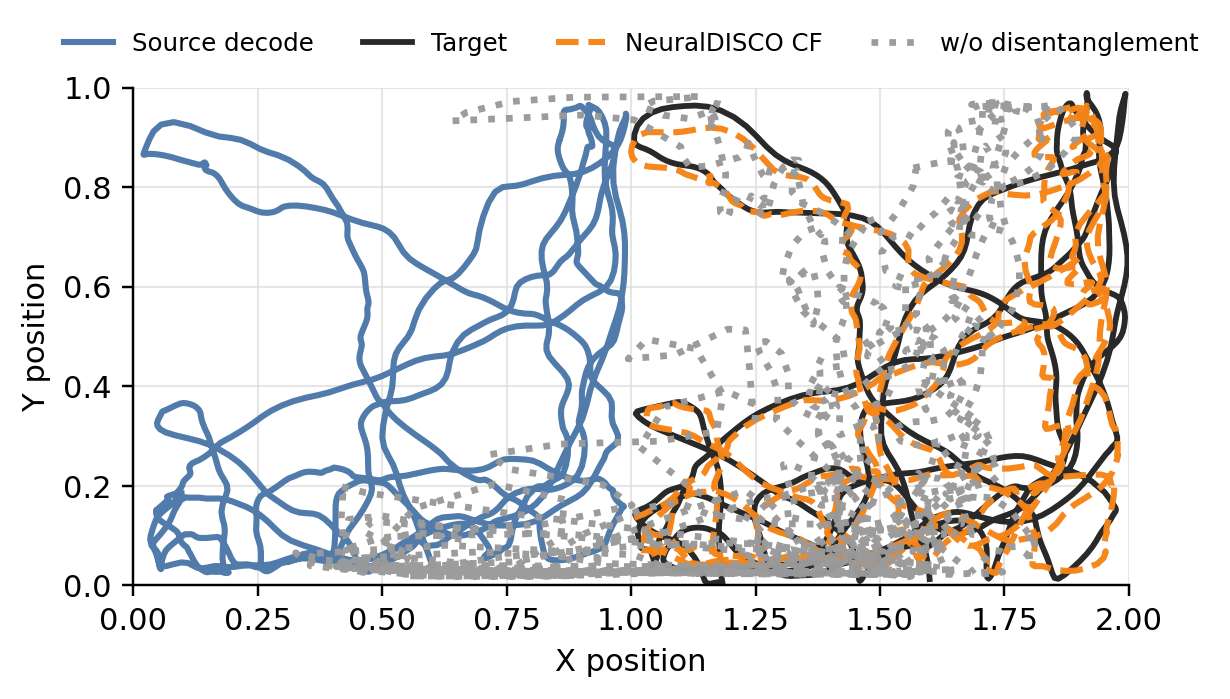}
  \caption{\textbf{RatInABox position-shift trajectory ablation.} Decoded trajectories for the position-shift counterfactual are shown for the source activity, target trajectory, the regularized Neural-DISCO counterfactual, and the architecture-matched unregularized control. The regularized model tracks the shifted target manifold substantially better, whereas the unregularized control exhibits diffuse trajectories and poorer preservation of the intended edit.}
  \label{fig:rat_position_shift_ablation}
\end{figure}

\begin{figure}[t]
  \centering
  \includegraphics[width=0.9\linewidth]{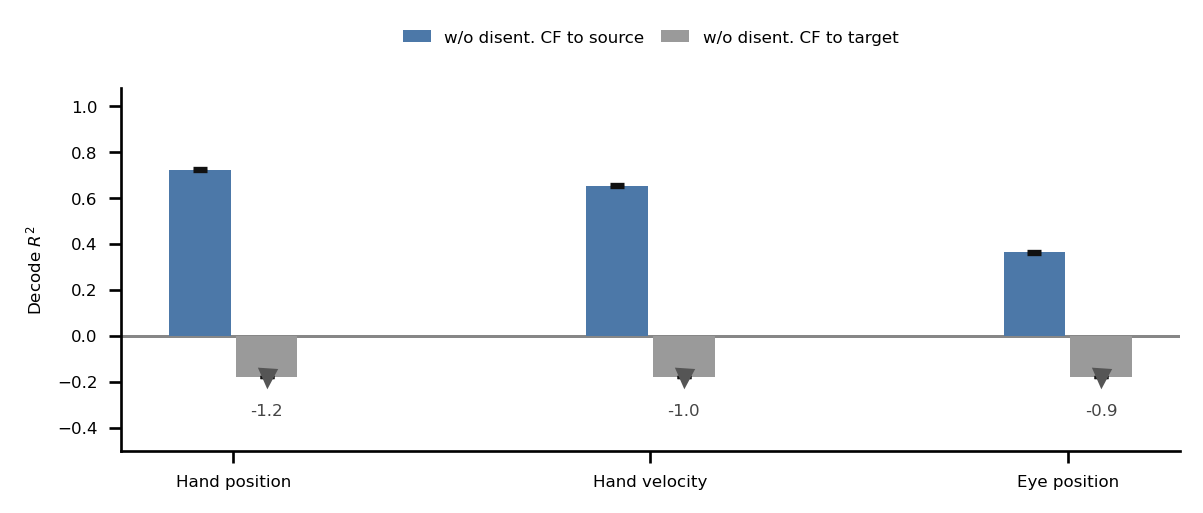}
  \caption{\textbf{MC Maze decoding for the unregularized control.} Downstream decoding $R^2$ for generated activity from the architecture-matched unregularized control, evaluated against both the source and target trajectories. For all three observed factors, the unregularized control remains closer to the source than to the target, indicating that it fails to perform the intended counterfactual edit.}
  \label{fig:mcmaze_no_reg_decode_r2}
\end{figure}

\begin{figure}[t]
  \centering
  \includegraphics[width=0.8\linewidth]{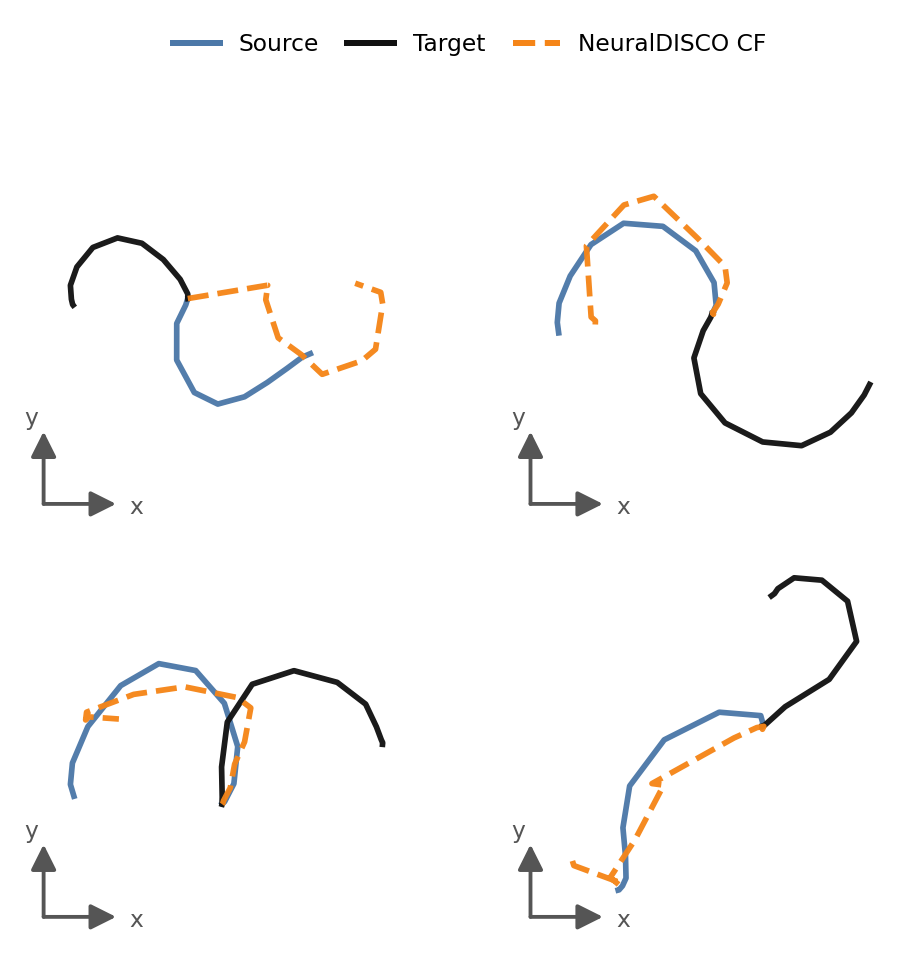}
  \caption{\textbf{MC Maze qualitative reach examples for the unregularized control.} Example hand-position trajectories for four source-target trial pairs. The generated counterfactual trajectories remain between the source and target reaches rather than matching the target movement, consistent with the poor target-decoding performance of the unregularized control.}
  \label{fig:mcmaze_no_reg_reach_trajectories}
\end{figure}

\begin{figure}[t]
  \centering
  \includegraphics[width=\textwidth]{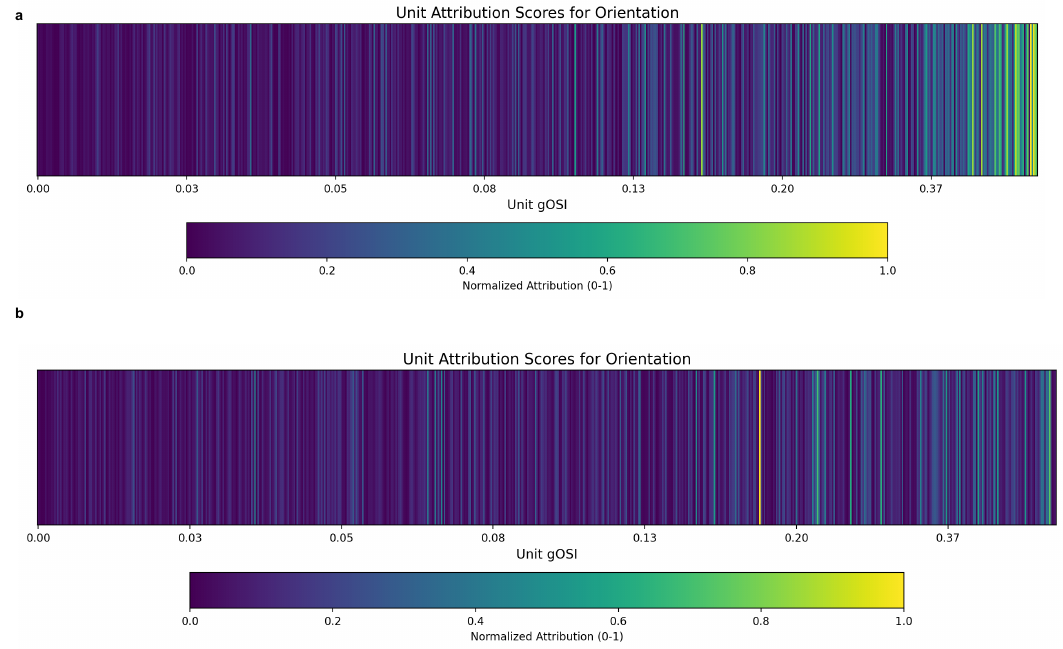}
  \caption{\textbf{Allen Attribution Heatmap sorted by Orientation Selectivity Index.} X-axis corresponds to individual units, with x-ticks indicating gOSI cutoffs, colored by orientation attribution value. Attribution scores are averaged over ten folds before min--max normalization within each method. (a) Attribution results of Neural-DISCO. (b) Attribution results of architecture-matched unregularized residual control.}
  \label{fig:allen_heatmap}
\end{figure}


\clearpage
\section{Dataset and Software Licenses}\label{appendixD}

We use the following datasets and software tools in this work. We comply with the terms of use and licensing conditions of each resource.

\textbf{RatInABox Simulator}
RatInABox \parencite{ratinabox} is an open-source Python toolkit for modelling rodent locomotion and generating synthetic neuronal data. The software is available at \url{https://github.com/RatInABox-Lab/RatInABox} and is released under the \textbf{MIT License}.
\textbf{NLB-MC\_Maze Dataset}

The MC\_Maze dataset is part of the Neural Latents Benchmark '21 \parencite{pei2022neurallatentsbenchmark21} and is publicly available via the DANDI Archive (\url{https://dandiarchive.org/dandiset/000128}). The dataset was originally collected by Kaufman, Churchland, and Shenoy at Stanford University and is distributed under the \textbf{Creative Commons Attribution 4.0 International (CC BY 4.0)} license. 
\textbf{Allen Brain Observatory -- Visual Coding Neuropixels}

The Allen Visual Coding Neuropixels dataset \parencite{AllenSDK_Neuropixels} is made available by the Allen Institute MindScope Program and is publicly accessible at \url{https://brain-map.org/our-research/circuits-behavior/visual-coding}. This dataset is distributed under the \textbf{Allen Institute Terms of Use} (\url{https://www.alleninstitute.org/legal/terms-of-use}), which permits free use for non-commercial research and educational purposes with attribution.

\textbf{pi-VAE}. We used the official implementation of pi-VAE \parencite{pivae}, available at \url{https://github.com/zhd96/pi-vae}.

\textbf{NEDS}. We used the official implementation of NEDS \parencite{neds}, available at
\url{https://github.com/yzhang511/NEDS}, which is released under the MIT License.

\textbf{Captum Integrated Gradients}. We use Captum's implementation of Integrated Gradients, available at \href{https://captum.ai/docs/extension/integrated_gradients}{https://captum.ai/docs/extension/integrated\_gradients}


\clearpage
\section*{NeurIPS Paper Checklist}

\begin{enumerate}

\item \textbf{Claims}
\item[] Question: Do the main claims made in the abstract and introduction accurately reflect the paper's contributions and scope?
\item[] Answer: \answerYes{}.
\item[] Justification: The abstract and introduction frame the paper around source-conditioned counterfactual editing and explicitly state that the counterfactuals are model-based predictions rather than identified causal effects.

\item \textbf{Limitations}
\item[] Question: Does the paper discuss the limitations of the work performed by the authors?
\item[] Answer: \answerYes{}.
\item[] Justification: Section~\ref{sec:discussion} discusses causal interpretation, residual-bottleneck sensitivity, and indirect validation in real neural data.

\item \textbf{Theory assumptions and proofs}
\item[] Question: For each theoretical result, does the paper provide the full set of assumptions and a complete and correct proof?
\item[] Answer: \answerYes{}.
\item[] Justification: Proposition~1 states the finite-sample residual capacity bound and Appendix~\ref{app:proof} gives the proof and assumptions.

\item \textbf{Experimental result reproducibility}
\item[] Question: Does the paper fully disclose all the information needed to reproduce the main experimental results of the paper to the extent that it affects the main claims and/or conclusions of the paper, regardless of whether the code and data are provided or not?
\item[] Answer: \answerYes{}.
\item[] Justification: A complete description of model structure and hyperparameters is provided in Appendix~\ref{app:details}, along with preprocessing instructions, random seeds, fold definitions, and session/unit filtering details needed for full reproduction.

\item \textbf{Open access to data and code}
\item[] Question: Does the paper provide open access to the data and code, with sufficient instructions to faithfully reproduce the main experimental results, as described in supplemental material?
\item[] Answer: \answerYes{}.
\item[] Justification: Our experiments use public datasets and simulators that are publicly available and we provide exact instructions for acquiring/setting them up. As well, the manuscript package includes a \href{https://github.com/TangLab-UBC/Neural-DISCO-neurips}{code release} with end-to-end reproduction commands/scripts for exact reproducibility.

\item \textbf{Experimental setting/details}
\item[] Question: Does the paper specify all the training and test details necessary to understand the results?
\item[] Answer: \answerYes{}.
\item[] Justification: The paper includes core training settings and hyperparameters in Appendix~\ref{app:details}, as well as all configuration parameters and information, including optimizer settings, preprocessing commands, split definitions, session identifiers, and filtering criteria.

\item \textbf{Experiment statistical significance}
\item[] Question: Does the paper report error bars suitably and correctly defined or other appropriate information about the statistical significance of the experiments?
\item[] Answer: \answerYes{}.
\item[] Justification: Section~\ref{sec:experiments} states that reported $\pm$ values and error bars denote standard error of the mean across CV models.

\item \textbf{Experiments compute resources}
\item[] Question: For each experiment, does the paper provide sufficient information on the computer resources needed to reproduce the experiments?
\item[] Answer: \answerYes{}.
\item[] Justification: Appendix~\ref{app:details} reports the GPU type, memory, approximate wall-clock GPU hours, and per-experiment compute breakdown.

\item \textbf{Code of ethics}
\item[] Question: Does the research conducted in the paper conform, in every respect, with the NeurIPS Code of Ethics?
\item[] Answer: \answerYes{}.
\item[] Justification: The work uses public animal neural datasets and a simulator, and the manuscript does not include human-subject data collection or high-risk deployment.

\item \textbf{Broader impacts}
\item[] Question: Does the paper discuss both potential positive societal impacts and negative societal impacts of the work performed?
\item[] Answer: \answerYes{}.
\item[] Justification: Section~\ref{sec:discussion} discusses the expected broader impact of the work as a research tool for \textit{in-silico} hypothesis generation in computational neuroscience. The work uses public animal neural recordings and simulations, introduces no new human-subject dataset, and is not intended for clinical, diagnostic, or closed-loop interventional deployment. Its broader societal risk is therefore limited, while its primary impact is to help prioritize neural--behavioral hypotheses and guide follow-up analyses.

\item \textbf{Safeguards}
\item[] Question: Does the paper describe safeguards that have been put in place for responsible release of data or models that have a high risk for misuse?
\item[] Answer: \answerNA{}.
\item[] Justification: The work does not release a high-risk generative model such as a language, image, or scraped-data model; it targets animal neural activity modeling.

\item \textbf{Licenses for existing assets}
\item[] Question: Are the creators or original owners of assets used in the paper properly credited and are the license and terms of use explicitly mentioned and properly respected?
\item[] Answer: \answerYes{}.
\item[] Justification: The paper cites the public datasets and baseline methods. Appendix~\ref{appendixD} lists asset sources and licenses; the code package provides pinned software dependencies and environment installation instructions.

\item \textbf{New assets}
\item[] Question: Are new assets introduced in the paper well documented and is the documentation provided alongside the assets?
\item[] Answer: \answerYes{}.
\item[] Justification: The paper does not introduce a new dataset. The code package includes reproduction instructions, experiment configurations, and pinned dependency locks.

\item \textbf{Crowdsourcing and research with human subjects}
\item[] Question: For crowdsourcing experiments and research with human subjects, does the paper include the full text of instructions given to participants and screenshots, if applicable, as well as details about compensation?
\item[] Answer: \answerNA{}.
\item[] Justification: The reported experiments do not involve crowdsourcing or new human-subject data collection.

\item \textbf{Institutional review board approvals or equivalent for research with human subjects}
\item[] Question: Does the paper describe potential risks incurred by study participants, whether such risks were disclosed to the subjects, and whether Institutional Review Board approvals or an equivalent review were obtained?
\item[] Answer: \answerNA{}.
\item[] Justification: The reported experiments do not involve new human-subject data collection.

\item \textbf{Declaration of LLM usage}
\item[] Question: Does the paper describe the usage of LLMs if it is an important, original, or non-standard component of the core methods in this research?
\item[] Answer: \answerNA{}.
\item[] Justification: The core methodology does not use LLMs as a model component. Use of LLMs only for writing or formatting does not need to be declared under this question.

\end{enumerate}

\end{document}